\DeclareMathOperator*{\argmin}{arg\,min}
\DeclareMathOperator*{\argmax}{arg\,max}
\newcommand{\distas}[1]{\mathbin{\overset{#1}{\kern\z@\sim}}}%
\newtheorem{theorem}{Theorem}
\newtheorem{cor}[theorem]{Corollary}
\newtheorem{proposition}[theorem]{Proposition}
\newcommand{\bas}[1]{\begin{align*}#1\end{align*}}
\newcommand{\ba}[1]{\begin{align}#1\end{align}}
\newcommand{\baa}[1]{\begin{equation}\begin{aligned}#1\end{aligned}\end{equation}}
\newcommand{\bE}{\mathbb{E}}
\newcommand{\cE}{\mathcal{E}}
\newcommand{\dir}{\text{Dirichlet}}
\newcommand{\beqs}{\vspace{0mm}\begin{eqnarray}}
\newcommand{\eeqs}{\vspace{0mm}\end{eqnarray}}
\newcommand{\barr}{\begin{array}}
\newcommand{\earr}{\end{array}}
\newcommand{\bv}[0]{{\boldsymbol{b}}}
\newcommand{\jv}[0]{{\boldsymbol{j}} }
\newcommand{\sv}[0]{{\boldsymbol{s}}}
\newcommand{\wv}{\boldsymbol{w}}
\newcommand{\xv}{\boldsymbol{x}}
\newcommand{\zv}{\boldsymbol{z}}
\newcommand{\cdotv}{\boldsymbol{\cdot}}
\newcommand{\Pimat}[0]{{\boldsymbol{\Pi}} }
\newcommand{\Phimat}{\boldsymbol{\Phi}}
\newcommand{\epsilonv}{\boldsymbol{\epsilon}}
\newcommand{\thetav}{\boldsymbol{\theta}}
\newcommand{\piv}{\boldsymbol{\pi}}
\newcommand{\phiv}{\boldsymbol{\phi}}
\newcommand{\varpiv}[0]{{\boldsymbol{\varpi}} }
\newcommand{\E}{\mathbb{E}}
\newcommand{\given}{\,|\,}
\providecommand{\leftsquigarrow}{%
 \mathrel{\mathpalette\reflect@squig\relax}%
}
\newcommand{\reflect@squig}[2]{%
 \reflectbox{$\m@th#1\rightsquigarrow$}%
}
\icmltitlerunning{ARSM: Augment-REINFORCE-Swap-Merge Gradient for Categorical Variables} 
\begin{document}

\twocolumn[
\icmltitle{
ARSM: Augment-REINFORCE-Swap-Merge Estimator for \\
Gradient Backpropagation Through Categorical Variables} 



\icmlsetsymbol{equal}{*}

\begin{icmlauthorlist}
\icmlauthor{Mingzhang Yin}{equal,to}
\icmlauthor{Yuguang Yue}{equal,to}
\icmlauthor{Mingyuan Zhou}{goo}
\end{icmlauthorlist}

\icmlaffiliation{to}{Department of Statistics and Data Sciences,}
\icmlaffiliation{goo}{Department of IROM, McCombs School of Business, The University of Texas at Austin, Austin, TX 78712, USA}
\icmlcorrespondingauthor{Mingyuan Zhou}{mingyuan.zhou@mccombs.utexas.edu}

\icmlkeywords{Discrete data analysis, latent variable models, variational auto-encoder, discrete-action policy gradient}

\vskip 0.3in
]



\printAffiliationsAndNotice{\icmlEqualContribution} 

\begin{abstract}To address the challenge of backpropagating the gradient through categorical variables, we propose the augment-REINFORCE-swap-merge (ARSM) gradient estimator that is unbiased and has low variance. ARSM first uses variable augmentation, REINFORCE, and Rao-Blackwellization to re-express the gradient as an expectation under the Dirichlet distribution, then uses variable swapping to construct differently expressed but equivalent expectations, and finally shares common random numbers between these expectations to achieve significant variance reduction. Experimental results show ARSM closely resembles the performance of the true gradient for optimization in univariate settings; outperforms existing estimators by a large margin when applied to categorical variational auto-encoders; and provides a ``try-and-see self-critic'' variance reduction method for discrete-action policy gradient, which removes the need of estimating baselines by generating a random number of pseudo actions and estimating their action-value functions. 
\end{abstract}

\section{Introduction}
\label{submission}

The need to maximize an objective function, expressed as the expectation over categorical variables, arises in a wide variety of settings, such as discrete latent variable models \cite{BNBP_EPPF,jang2016categorical,maddison2016concrete} and 
policy optimization for reinforcement learning (RL) with discrete actions \cite{sutton2018reinforcement,weaver2001optimal,schulman2015trust,mnih2016asynchronous,grathwohl2017backpropagation}. 
More specifically, let us denote $z_k\in\{1,2,\ldots,{C}\}$ as a univariate ${C}$-way categorical variable, and $\zv=(z_1,\ldots,z_K)\in\{1,2,\ldots,{C}\}^K$ as a $K$-dimensional ${C}$-way multivariate categorical vector. In discrete latent variable models, $K$ will be the dimension of the discrete latent space, each dimension of which can be further represented as a ${C}$-dimensional one-hot vector. In RL, ${C}$ represents the size of the discrete action space and $\zv$ is a sequence of discrete actions from that space.
In even more challenging settings, one may have a sequence of $K$-dimensional $C$-way multivariate categorical vectors, which appear both in categorical latent variable models with multiple stochastic layers, and in RL with a high dimensional discrete action space or multiple agents, which may consist of as many as $C^K$ unique combinations at each time step. 
 
 With $f(\zv)$ and $q_{\phiv}(\zv)$ denoted as the reward function and distribution for categorical $\zv$, respectively, we need to optimize parameter $\phiv$ to maximize the expected reward as
\ba{
\mathcal{E}(\phiv)=\textstyle{\int} f{}(\zv) q_{\phiv}(\zv) d\zv = \E_{\zv\sim q_{\phiv}(\zv)} [f{}(\zv)]. \label{eq:E}
}
Here we consider both categorical latent variable models and policy optimization for discrete actions, which arise in a wide array of real-world applications. 
A number of unbiased estimators for backpropagating the gradient through discrete latent variables have been recently proposed \cite{tucker2017rebar,grathwohl2017backpropagation,ARM_ICLR2019,andriyash2018improved}. However, they all mainly, if not exclusively, focus on the binary case ($i.e.$, $C=2$). 
The categorical case ($i.e.$, $C\ge 2$) is more widely applicable but generally much more challenging. 
In this paper, to optimize the objective in \eqref{eq:E}, inspired by the augment-REINFORCE-merge (ARM) gradient estimator restricted for binary variables \citep{ARM_ICLR2019}, we introduce the augment-REINFORCE-swap-merge (ARSM) estimator that is unbiased and well controls its variance for categorical variables.

The proposed ARSM estimator combines variable augmentation \citep{tanner1987calculation,van2001art}, REINFORCE \citep{williams1992simple} in an augmented space, Rao-Blackwellization \citep{casella1996rao}, and a merge step that shares common random numbers between different but equivalent gradient expectations to achieve significant variance reduction. While ARSM with $C=2$ reduces to the ARM estimator \citep{ARM_ICLR2019}, whose merge step can be realized by applying antithetic sampling \citep{mcbook} in the augmented space, the merge step of ARSM with ${C}>2$ cannot be realized in this manner. Instead, ARSM requires distinct variable-swapping operations to construct differently expressed but equivalent expectations under the Dirichlet distribution before performing its merge step. 

Experimental results on both synthetic data and several representative tasks involving categorical variables are used to illustrate the distinct working mechanism of ARSM. 
 In particular, 
 our experimental results on latent variable models with one or multiple categorical stochastic hidden layers show that ARSM provides state-of-the-art training and
out-of-sample prediction performance. 
Our experiments on RL with discrete action spaces show that 
ARSM provides a ``try-and-see self-critic'' method to produce unbiased and low-variance policy gradient estimates, removing the need of constructing baselines by generating a random number of pseudo actions at a given state and estimating their action-value functions. 
These results demonstrate the effectiveness and versatility of the ARSM estimator for gradient backpropagation through categorical stochastic layers. Python code for reproducible research is available at \href{https://github.com/ARM-gradient/ARSM}{https://github.com/ARM-gradient/ARSM}.

\subsection{Related Work}

For optimizing \eqref{eq:E} for categorical $\zv$, the difficulty lies in developing a low-variance and preferably unbiased estimator for its gradient with respect to $\phiv$, expressed as $\nabla_{\phiv}\mathcal{E}(\phiv)$. An unbiased but high-variance gradient estimator that is universally applicable to \eqref{eq:E} is REINFORCE \citep{williams1992simple}. 
Using the score function $\nabla_{\phiv} \log q_{\phiv}(\zv) = \nabla_{\phiv} q_{\phiv}(\zv)/q_{\phiv}(\zv)$, REINFORCE expresses the gradient as an expectation as 
\ba{
\nabla_{\phiv}\mathcal{E}(\phiv)&= 
\E_{\zv\sim q_{\phiv}(\zv)} [f{}(\zv) \nabla_{\phiv} \log q_{\phiv}(\zv) ],\label{eq:E_score}
}
and approximates it with Monte Carlo integration \citep{mcbook}. However, the estimation variance with a limited number of Monte Carlo samples is often too high to make vanilla REINFORCE a sound choice for 
categorical $\zv$.

To address the high-estimation-variance issue for categorical~$\zv$, one often resorts to
a biased gradient estimator. For example, \citet{maddison2016concrete} and \citet{jang2016categorical} relax the categorical variables with continuous ones and then apply the reparameterization trick to estimate the gradients, reducing variance 
but introducing bias. 
Other biased estimators for backpropagating through binary variables include the straight-through estimator \cite{ST,bengio2013estimating} and the ones of \citet{gregor2014deep,raiko2014techniques,chengstraight}. 
With biased gradient estimates, 
however, a gradient ascent algorithm may not be guaranteed to work, or may converge to unintended solutions.

To keep REINFORCE unbiased while sufficiently reducing its variance, a usual strategy is to introduce appropriate control variates, also known as baselines \citep{williams1992simple}, into the expectation in \eqref{eq:E_score} before performing Monte Carlo integration \citep{paisley2012variational,ranganath2014black, mnih2014neural,
gu2015muprop,mnih2016variational,
ruiz2016generalized,kucukelbir2017automatic,naesseth2017reparameterization}. For discrete $\zv$, 
 \citet{tucker2017rebar} and \citet{grathwohl2017backpropagation} improve REINFORCE by introducing continuous relaxation based baselines, whose parameters are optimized by minimizing the sample variance of gradient estimates. 

\section{ ARSM Gradient For Categorical Variables} 
Let us denote $z\sim{\mbox{Cat}}(\sigma(\phiv))$ as a categorical variable such that $\textstyle P(z={c}\given \phiv) =\sigma(\phiv)_{c} ={e^{\phi_{c}}}\big/{\sum_{i=1}^{C} e^{\phi_{i}}},$
where $\phiv:=(\phi_1,\ldots,\phi_{C})$ and $\sigma(\phiv):=(e^{\phi_1},\ldots,e^{\phi_{C}})/\sum_{i=1}^{C} e^{\phi_i}$ is the softmax function. 
For the expectated reward 
defined as $$\textstyle\mathcal{E}(\phiv):=\E_{z\sim{\text{Cat}}(\sigma(\phiv))}[f{}(z)]=\sum_{i=1}^{C} f(i) \sigma(\phiv)_i,$$ the gradient can be expressed analytically as
\ba{
\textstyle \nabla_{\phi_{c}} \mathcal{E}(\phiv) 
= \sigma(\phiv)_{c} f(c)-\sigma(\phiv)_{c} \mathcal{E}(\phiv) 
\label{eq:CatGrad}}
or expressed with REINFORCE as 
\ba{
\nabla_{\phi_{c}} \mathcal{E}(\phiv) 
=\E_{z\sim{\text{Cat}}(\sigma(\phiv))}\left[f(z) (\mathbf{1}_{[z={c}]} - \sigma(\phiv)_{c} )
\right], \label{eq:REINFORCE}}
where $\mathbf{1}_{[\cdotv]}$ is an indicator function that is equal to one if the argument is true and zero otherwise. However, 
the analytic expression quickly becomes intractable for a multivariate setting, and the REINFORCE estimator often comes with significant estimation 
variance. 
While the ARM estimator of \citet{ARM_ICLR2019} is unbiased and provides significant variance reduction for binary variables, it is restricted to ${C}=2$ and hence has limited applicability. 

Below we introduce the augment-REINFORCE (AR), AR-swap (ARS), and ARS-merge (ARSM) estimators for a univariate ${C}$-way categorical variable, and later generalize them to multivariate, hierarchical, and sequential settings.

\subsection{AR: Augment-REINFORCE}
Let us denote $\piv:=(\pi_1,\ldots,\pi_{C})\sim{\text{Dir}}(\mathbf{1}_{C})$ as a Dirichlet distribution whose ${C}$ parameters are all ones. 
We 
 first state three statistical properties that can directly lead to the proposed AR estimator. 
 We describe in detail in Appendix~\ref{sec:derivation} how we actually arrive at the AR estimator, with these properties obtained as by-products, by performing variable augmentation, REINFORCE, and Rao-Blackwellization. Thus we are in fact reverse-engineering our original derivation of the AR estimator to help concisely present 
 our findings. 
 
\emph{\textbf{Property I.} The categorical variable $z\sim\emph{\mbox{Cat}}(\sigma(\phiv))$ can be equivalently generated as} $$\textstyle z:=\argmin_{i\in\{1,\ldots,{C}\}} \pi_i e^{-\phi_i},~\piv\sim{\text{Dir}}(\mathbf{1}_{C}).$$
\emph{\textbf{Property II.}} 
$
 \mathcal{E}(\phiv) = \E_{\piv\sim{\text{Dir}}(\mathbf{1}_{C})}[f( \argmin_i \pi_i e^{-\phi_i})].
$\\
\textbf{\emph{Property\,III}.} 
$\E_{ \piv\sim{\text{Dir}}(\mathbf{1}_{C})}[f(\argmin\nolimits_{i} \pi_i e^{-\phi_i} ){C}\pi_{c}] \textstyle= \mathcal{E}(\phiv)+ \sigma(\phiv)_{c} \mathcal{E}(\phiv)
- \sigma(\phiv)_{c} f(c). 
$

These three properties, Property III in particular, are previously unknown to the best of our knowledge. They are directly linked to the AR estimator shown below.
\begin{theorem}[AR estimator]\label{th:AR} 
The gradient of $\mathcal{E}(\phiv)=\E_{z\sim\emph{\text{Cat}}(\sigma(\phiv))}[f{}(z)] $, as shown in \eqref{eq:CatGrad}, 
can be re-expressed as an expectation under a Dirichlet distribution as
 \begin{equation}
\begin{aligned}
\nabla_{\phi_{c}} \mathcal{E}(\phiv) & = \E_{ \piv\sim\emph{\text{Dir}}(\mathbf{1}_{C})}[g_{\emph{\text{AR}}}(\piv)_c],\\
g_{\emph{\text{AR}}}(\piv)_c:&=f(z )(1-{C} \pi_{c}),\\
z:&=\argmin\nolimits_{i\in\{1,\ldots,{C}\}} \pi_i e^{-\phi_i}.
\end{aligned}\label{eq:Grad_AR}
 \end{equation}
\end{theorem}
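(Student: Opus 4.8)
The plan is to obtain \eqref{eq:Grad_AR} by a direct, essentially mechanical, combination of Properties II and III, and then to locate the genuine content of the statement in those two auxiliary facts. First I would split the AR integrand by linearity of expectation. Writing $g_{\text{AR}}(\piv)_c = f(z)(1-C\pi_c) = f(z) - C\pi_c f(z)$ with $z=\argmin_{i} \pi_i e^{-\phi_i}$, I get
\[
\E_{\piv\sim\text{Dir}(\mathbf{1}_{C})}[g_{\text{AR}}(\piv)_c] = \E_{\piv}[f(z)] - \E_{\piv}[C\pi_c f(z)].
\]
Property II identifies the first term as $\mathcal{E}(\phiv)$, while Property III identifies the second as $\mathcal{E}(\phiv) + \sigma(\phiv)_c \mathcal{E}(\phiv) - \sigma(\phiv)_c f(c)$. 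Subtracting, the two copies of $\mathcal{E}(\phiv)$ cancel and I am left with $\sigma(\phiv)_c f(c) - \sigma(\phiv)_c \mathcal{E}(\phiv)$, which is exactly the analytic gradient in \eqref{eq:CatGrad}. This proves the theorem, so the real work is in the properties themselves.

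Properties I and II I expect to be routine via a competing-exponentials argument. I would represent $\piv\sim\text{Dir}(\mathbf{1}_{C})$ through i.i.d.\ standard exponentials $E_i$ by $\pi_i = E_i/\sum_k E_k$. Since the common denominator does not affect the $\argmin$, I have $z = \argmin_i \pi_i e^{-\phi_i} = \argmin_i E_i e^{-\phi_i}$, and each $E_i e^{-\phi_i}\sim\text{Exp}(e^{\phi_i})$. For independent exponentials the index attaining the minimum equals $c$ with probability $e^{\phi_c}/\sum_k e^{\phi_k} = \sigma(\phiv)_c$, which is precisely $\text{Cat}(\sigma(\phiv))$ and gives Property I; Property II then follows immediately by taking $\E[f(z)]$ under this equivalence.

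The hard part will be Property III, which the authors note is previously unknown. I would first condition on $z$ to write
\[
\E_{\piv}[C\pi_c f(z)] = \sum_{j=1}^{C} f(j)\, C\,\E_{\piv}[\pi_c\, \mathbf{1}_{[z=j]}],
\]
reducing the claim to the joint moments $C\,\E_{\piv}[\pi_c\,\mathbf{1}_{[z=j]}]$. Using the same exponential representation together with the memorylessness of the minimum of competing exponentials (conditionally on the winner $T_j$ taking value $t$, the remaining overshoots $T_i - t$ are again independent exponentials), I would evaluate these moments in closed form, treating the diagonal case $c=j$ separately from $c\neq j$. This integration is where I expect the bottleneck to lie: it is the step that produces the specific combination $\mathcal{E}(\phiv) + \sigma(\phiv)_c \mathcal{E}(\phiv) - \sigma(\phiv)_c f(c)$, and in particular the self-critic–like term $-\sigma(\phiv)_c f(c)$. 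A convenient consistency check during this computation is the case $f\equiv 1$, for which both sides must collapse to $1$ since $\E_{\piv}[C\pi_c]=1$.
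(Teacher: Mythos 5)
Your overall architecture is sound and genuinely different from the paper's. The reduction of the theorem to Properties II and III by writing $g_{\text{AR}}(\piv)_c = f(z) - C\pi_c f(z)$ and subtracting is exactly right, and your competing-exponentials argument for Properties I and II (normalize iid $\text{Exp}(1)$ variables, note the normalizer cancels inside the $\argmin$, and use the exponential-racing probability $e^{\phi_c}/\sum_i e^{\phi_i}$) matches the paper's augmentation step in Appendix A.1. Where you diverge: the paper never proves Property III head-on. It derives the AR estimator directly by applying REINFORCE to the augmented representation $\mathcal{E}(\phiv)=\E_{\tau_i\sim\text{Exp}(e^{\phi_i})}[f(\argmin_i \tau_i)]$, which gives $\nabla_{\phi_c}\mathcal{E}(\phiv)=\E[f(\argmin_i\tau_i)(1-\tau_c e^{\phi_c})]$, then reparameterizes $\tau_i=\epsilon_i e^{-\phi_i}$ and Rao-Blackwellizes via $\epsilon_i=\pi_i\epsilon$ with $\piv\sim\text{Dir}(\mathbf{1}_C)$ independent of $\epsilon\sim\text{Gamma}(C,1)$ and $\E[\epsilon]=C$; Property III then falls out as a by-product. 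Your plan inverts this order; it buys a score-function-free, purely elementary derivation, at the price of having to compute conditional moments that the paper's route never touches.

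The gap is in your plan for Property III, at precisely the step you flag as the bottleneck. Memorylessness of the competing exponentials controls the unnormalized variables $T_i=E_i e^{-\phi_i}$, but the moments you need, $C\,\E[\pi_c\mathbf{1}_{[z=j]}]$, involve the ratio $\pi_c=E_c/\sum_k E_k$, and memorylessness alone does not by itself yield a closed form for the expectation of a ratio restricted to an event. The missing ingredient is Gamma--Dirichlet independence: $S=\sum_k E_k\sim\text{Gamma}(C,1)$ is independent of $\piv$, hence of $(\pi_c,z)$, so $\E[E_c\mathbf{1}_{[z=j]}]=\E[S\pi_c\mathbf{1}_{[z=j]}]=\E[S]\,\E[\pi_c\mathbf{1}_{[z=j]}]=C\,\E[\pi_c\mathbf{1}_{[z=j]}]$; this is exactly the paper's Rao-Blackwellization step run in reverse. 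Once you de-normalize, your computation closes in two lines: writing $\sigma_c := \sigma(\phiv)_c$ and $\Lambda := \sum_i e^{\phi_i}$, the minimum of the $T_i$ is $\text{Exp}(\Lambda)$-distributed and independent of which index achieves it, so $\E[E_c\mathbf{1}_{[z=c]}]=e^{\phi_c}\sigma_c/\Lambda=\sigma_c^2$; for $j\neq c$, $T_c = \min_i T_i + V_c$ with overshoot $V_c\sim\text{Exp}(e^{\phi_c})$ independent of the event, so $\E[E_c\mathbf{1}_{[z=j]}]=e^{\phi_c}\sigma_j\left(1/\Lambda + e^{-\phi_c}\right)=\sigma_j(\sigma_c+1)$. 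Summing against $f$ gives $\E[C\pi_c f(z)]=f(c)\sigma_c^2+(1+\sigma_c)\big(\mathcal{E}(\phiv)-f(c)\sigma_c\big)=\mathcal{E}(\phiv)+\sigma_c\mathcal{E}(\phiv)-\sigma_c f(c)$, which is Property III, and your $f\equiv 1$ sanity check passes. So your route is completable and correct in outline, but as written it would stall without this independence step.
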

 
Distinct from REINFORCE in \eqref{eq:REINFORCE}, the AR estimator in \eqref{eq:Grad_AR} now expresses the gradient as an expectation under a Dirichlet distributed random noise. From this point of view, it is somewhat related to the reparameterization trick \citep{kingma2013auto,rezende2014stochastic}, which is widely used to express the gradient of an expectation under reparameterizable random variables as an expectation under random noises.
Thus one may consider AR as a special type of reparameterization gradient, which, however, requires neither $z$ to be reparameterizable nor $f(\cdotv)$ to be differentiable.

 \subsection{ARS: Augment-REINFORCE-Swap}

Let us swap the $m$th and $j$th elements of $\piv$ to define vector 
$$
\piv^{_{m \leftrightharpoons j}}:=(\pi^{_{m \leftrightharpoons j}}_1,\ldots,\pi^{_{m \leftrightharpoons j}}_{C}), $$
where $\pi^{_{m \leftrightharpoons j}}_m = \pi_j$, $\pi^{_{m \leftrightharpoons j}}_j = \pi_m$, and $\forall ~{c}\notin\{m,j\}$, $\pi^{_{m \leftrightharpoons j}}_{c}= {\pi_c}$. Another property to be repeatedly used is: 

\emph{\textbf{Property IV.}
If} $\piv\sim{\text{Dir}}(\mathbf{1}_{C})$, \emph{then} $\piv^{_{m \leftrightharpoons j}}\sim{\text{Dir}}(\mathbf{1}_{C})$.

This leads to
 a key observation for the AR estimator in~\eqref{eq:Grad_AR}: swapping any two variables of the probability vector $\piv$ inside the expectation does not change the expected value. 
Using the idea of sharing common random numbers between different expectations to potentially significantly reduce Monte Carlo integration variance \citep{mcbook}, we propose to swap $\pi_{c}$ and $\pi_j$ in \eqref{eq:Grad_AR}, where $ j\in\{1,\ldots,{C}\}$ is a reference category chosen independently of $\piv$ and $\phiv$. This variable-swapping operation changes 
the AR estimator to 
 \begin{equation}
\begin{aligned}
\nabla_{\phi_{c}} \mathcal{E}(\phiv) &
 = \E_{ \piv\sim{\text{Dir}}(\mathbf{1}_{C})}[
 g_{\text{AR}}(\piv^{_{{c} \leftrightharpoons j}})_c]\\
 g_{\text{AR}}(\piv^{_{{c} \leftrightharpoons j}})_c:&=
 f( z^{_{{c} \leftrightharpoons j}} )(1-{C} \pi_j 
 ),\\
 z^{_{{c} \leftrightharpoons j}} :&=\argmin\nolimits_{i\in\{1,\ldots,{C}\}} \pi_i^{_{{c} \leftrightharpoons j}} e^{-\phi_i},
\end{aligned}\label{eq:Grad_AR_swap}
 \end{equation}
 where we have applied identity $\pi_{c}^{_{{c} \leftrightharpoons j}} = \pi_j $ and Property~IV. 
We refer to $z$ defined in \eqref{eq:Grad_AR} as the ``true action,'' and $z^{_{{c} \leftrightharpoons j}}$ defined in \eqref{eq:Grad_AR_swap} as the ${c}$th ``pseudo action'' given $j$ as the reference category. Note the pseudo actions satisfy the following properties: $z^{_{{c} \leftrightharpoons j}} =z^{_{{j} \leftrightharpoons c}} $ and $z^{_{{c} \leftrightharpoons j}} =z$ if ${{c}=j}$, and the number of unique values in $\{z^{_{{c} \leftrightharpoons j}} \}_{{c},j}$ that are different from the true action $z$ is between $0$ and ${C}-1$.

With \eqref{eq:CatGrad}, we have another useful property as

\emph{\textbf{Property V.}} ~~$\sum_{{c}=1}^{C}\nabla_{\phi_{c}} \mathcal{E}(\phiv) = 0$. 

Combining it with the 
estimator in 
\eqref{eq:Grad_AR_swap} leads to 
\begin{equation}
\begin{aligned}
&\textstyle\E_{ \piv\sim{\text{Dir}}(\mathbf{1}_{C})}\big[\frac{1}{{C}}\sum_{c=1}^{C}
g_{\text{AR}}(\piv^{_{{c} \leftrightharpoons j}})_c
 \big] =0.
\end{aligned}\label{eq:baseline}
 \end{equation} 
Thus we can utilize $\frac{1}{{C}}\sum_{c=1}^{C}
g_{\text{AR}}(\piv^{_{{c} \leftrightharpoons j}})_c$
as a baseline function that is nonzero in general but has zero expectation under $\piv\sim\mbox{Dir}(\mathbf{1}_{C})$. Subtracting \eqref{eq:baseline} from \eqref{eq:Grad_AR_swap} leads to another unbiased
 estimator, with category $j$ as the reference, as 
\begin{equation}
\begin{aligned}
&\nabla_{\phi_{c}}\mathcal{E}(\phiv) \textstyle = \E_{ \piv\sim{\text{Dir}}(\mathbf{1}_{C})} 
 [g_{{\text{ARS}}}(\piv,j)_c],
 \label{eq:ARM-cat}\\
&\!\!\!\! g_{{\text{ARS}}}(\piv,j)_{c} :=
g_{\text{AR}}(\piv^{_{{c} \leftrightharpoons j}})_c - \textstyle \frac{1}{{C}}\sum_{m=1}^{C}
g_{\text{AR}}(\piv^{_{{m} \leftrightharpoons j}})_m,\\
&\textstyle =\big[ f(z^{_{{c} \leftrightharpoons j}} )- \frac{1}{{C}}\sum_{m=1}^{C} f(z^{_{{m} \leftrightharpoons j}} )\big](1-C\pi_j),
\end{aligned}\!\!
 \end{equation}
which is referred to as the AR-swap (ARS) estimator, due to the use of variable-swapping in its derivation from AR. 

 \subsection{ARSM: Augment-REINFORCE-Swap-Merge}

For ARS in \eqref{eq:ARM-cat},
when 
 the reference category~$ j$ is randomly chosen from $\{1,\ldots,C\}$ and hence is independent of $\piv$ and~$\phiv$, 
it is unbiased. 
Furthermore, we find 
that it can be further improved, especially when $C$ is large, 
by adding a merge step to construct the ARS-merge (ARSM) estimator: 
 \begin{theorem}[
 ARSM estimator\label{theorem 3}] 
The gradient of $\mathcal{E}(\phiv)=\E_{z\sim\emph{\text{Cat}}(\sigma(\phiv))}[f{}(z)] $ 
with respect to $\phi_{c}$, can be expressed as 
\begin{equation}
\begin{aligned}
 &\nabla_{\phi_{c}}\mathcal{E}(\phiv) \textstyle= 
 \E_{ \piv\sim\emph{\text{Dir}}(\mathbf{1}_{C})} \big[ g_{\emph{\text{ARSM}}}(\piv)_{c} 
 \big],\\
&g_{\emph{\text{ARSM}}}(\piv)_{c} :\textstyle= \frac{1}{C}\sum_{j=1}^{C}
 g_{\emph{\text{ARS}}}(\piv,j)_{c}\\
 & =
\textstyle \sum_{j=1}^C
 \big[
 f(z^{_{{c} \leftrightharpoons j}} )- \frac{1}{{C}}\sum_{m=1}^{C} f(z^{_{{m} \leftrightharpoons j}} ) \big]
 (\frac{1}{C} -\pi_j). 
 \label{eq:ARM-cat2}
\end{aligned}
\end{equation}
 \end{theorem}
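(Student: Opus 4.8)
The plan is to obtain the ARSM estimator by simply averaging the already-established ARS estimator of \eqref{eq:ARM-cat} over all ${C}$ choices of the reference category, exploiting the fact that a single common random vector $\piv$ can be shared across these choices. The essential point is that \eqref{eq:ARM-cat} is an exact identity for \emph{each fixed} reference $j\in\{1,\ldots,{C}\}$: its left-hand side $\nabla_{\phi_{c}}\mathcal{E}(\phiv)$ does not depend on $j$, so all ${C}$ of these expectation identities hold simultaneously and share the same value.

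First I would record these ${C}$ identities, $\nabla_{\phi_{c}}\mathcal{E}(\phiv)=\E_{\piv\sim\text{Dir}(\mathbf{1}_{C})}[g_{\text{ARS}}(\piv,j)_{c}]$ for $j=1,\ldots,{C}$, and take their arithmetic mean. By linearity of expectation the average of unbiased estimators is again unbiased, so
\[
\nabla_{\phi_{c}}\mathcal{E}(\phiv)=\frac{1}{{C}}\sum_{j=1}^{C}\E_{\piv\sim\text{Dir}(\mathbf{1}_{C})}\big[g_{\text{ARS}}(\piv,j)_{c}\big]=\E_{\piv\sim\text{Dir}(\mathbf{1}_{C})}\Big[\frac{1}{{C}}\sum_{j=1}^{C}g_{\text{ARS}}(\piv,j)_{c}\Big],
\]
which is precisely the first displayed equality defining $g_{\text{ARSM}}(\piv)_{c}$ in \eqref{eq:ARM-cat2}.

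It then remains to put the integrand in closed form. I would substitute the explicit ARS integrand $g_{\text{ARS}}(\piv,j)_{c}=\big[f(z^{_{{c}\leftrightharpoons j}})-\frac{1}{{C}}\sum_{m=1}^{C}f(z^{_{{m}\leftrightharpoons j}})\big](1-C\pi_j)$ from \eqref{eq:ARM-cat}, pull the outer factor $\frac{1}{{C}}$ inside the sum over $j$, and combine it with $(1-C\pi_j)$ via the elementary identity $\frac{1}{{C}}(1-C\pi_j)=\frac{1}{{C}}-\pi_j$. This directly yields the claimed expression $\sum_{j=1}^{C}\big[f(z^{_{{c}\leftrightharpoons j}})-\frac{1}{{C}}\sum_{m=1}^{C}f(z^{_{{m}\leftrightharpoons j}})\big]\big(\frac{1}{{C}}-\pi_j\big)$.

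There is no genuine analytic obstacle: unbiasedness is inherited immediately from the ARS identity, and the remaining manipulation is bookkeeping. The only point requiring care is conceptual rather than technical — one must confirm that \eqref{eq:ARM-cat} holds for every \emph{fixed} $j$ (it does, being obtained by subtracting the zero-expectation baseline \eqref{eq:baseline} from \eqref{eq:Grad_AR_swap} for each fixed $j$), which is what licenses averaging the ${C}$ identities while reusing one draw of $\piv$. This shared-randomness averaging is exactly the merge step; although it leaves the expectation unchanged, it is what drives the variance reduction that motivates the theorem, so I would emphasize that the content of the result lies in this construction rather than in the unbiasedness argument itself.
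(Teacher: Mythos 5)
Your proposal is correct and is essentially the paper's own argument: the paper proves the ARS identity \eqref{eq:ARM-cat} for a fixed reference $j$ and then notes that Theorem~\ref{theorem 3} ``directly follows,'' meaning precisely the averaging over $j=1,\ldots,{C}$ with a shared draw of $\piv$, linearity of expectation, and the bookkeeping identity $\frac{1}{{C}}(1-C\pi_j)=\frac{1}{{C}}-\pi_j$ that you spell out. Your added remark that unbiasedness requires \eqref{eq:ARM-cat} to hold for each fixed $j$ chosen independently of $\piv$ and $\phiv$ is a correct and worthwhile clarification of the step the paper leaves implicit.
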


Note ARSM requires $C(C-1)/2$ swaps to generate pseudo actions, the unique number of which that differ from $z$ is between $0$ and $C-1$; a naive implementation requires $O(C^2)$ $\argmin$ operations, which, however, is totally unnecessary, as in general it can at least
be made below $O(2C)$ and hence is scalable even $C$ is very large ($e.g.$, $C=10,000$); please see Appendix \ref{sec:fast compute} and the provided code for more details. 
Note if all pseudo actions $z^{_{{c} \leftrightharpoons j}}$ are the same as the true action $z$, then the gradient estimates will be zeros for all $\phi_{c}$. 
\begin{cor}\label{cor:ARM}
When ${C}=2$, both the ARS estimator in \eqref{eq:ARM-cat} and ARSM estimator in \eqref{eq:ARM-cat2} 
reduce to the unbiased binary ARM estimator introduced in \citet{ARM_ICLR2019}.
\end{cor}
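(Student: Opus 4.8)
The plan is to substitute $C=2$ directly into the ARS and ARSM formulas, evaluate the resulting two-term sums explicitly, and match the outcome against the closed form of the binary ARM estimator. First I would specialize the augmenting noise: since $\text{Dir}(\mathbf{1}_2)$ is the uniform distribution on the $1$-simplex, I can write $\pi_1=\pi\sim\text{Unif}(0,1)$ and $\pi_2=1-\pi$, and collapse the two logits into the single effective parameter $\phi:=\phi_1-\phi_2$, on which $\mathcal{E}$ depends through the logistic link $\sigma(\phi)=1/(1+e^{-\phi})$ obtained by reducing the softmax to two categories. By Property V we have $\nabla_{\phi_2}\mathcal{E}=-\nabla_{\phi_1}\mathcal{E}$, so it suffices to treat the $c=1$ component.

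Next I would use Property I to turn the $\argmin$ definitions into explicit thresholds. A short computation gives $\mathbf{1}_{[z=1]}=\mathbf{1}_{[\pi<\sigma(\phi)]}$ for the true action and $\mathbf{1}_{[z^{_{2\leftrightharpoons1}}=1]}=\mathbf{1}_{[\pi>\sigma(-\phi)]}$ for the single pseudo action. With $C=2$ every inner sum in \eqref{eq:ARM-cat} has only two terms, and using $z^{_{2\leftrightharpoons2}}=z$ and $z^{_{1\leftrightharpoons2}}=z^{_{2\leftrightharpoons1}}$ I would simplify $g_{\text{ARS}}(\piv,1)_1=\tfrac12[f(z)-f(z^{_{2\leftrightharpoons1}})](1-2\pi)$. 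Relabeling $f$ through $z\mapsto b=\mathbf{1}_{[z=1]}$ and setting $u=\pi$, this is exactly the binary ARM estimator $[f(\mathbf{1}_{[u>\sigma(-\phi)]})-f(\mathbf{1}_{[u<\sigma(\phi)]})](u-\tfrac12)$; the apparent sign flip is absorbed by $(u-\tfrac12)=-\tfrac12(1-2\pi)$, so the two expressions coincide term by term for a shared draw.

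Finally, for ARSM I would show the merge step is vacuous when $C=2$. Computing the remaining reference $g_{\text{ARS}}(\piv,2)_1$ and using $1-2\pi_2=-(1-2\pi)$ together with $z^{_{1\leftrightharpoons2}}=z^{_{2\leftrightharpoons1}}$ gives $g_{\text{ARS}}(\piv,2)_1=g_{\text{ARS}}(\piv,1)_1$; hence the average $\tfrac12\sum_{j=1}^2 g_{\text{ARS}}(\piv,j)_1$ equals a single ARS term, which I have already identified with ARM. This simultaneously settles both claims, since ARS for either reference already equals ARM.

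The main obstacle is bookkeeping rather than conceptual: I must pin down the exact sign and one-hot relabeling conventions of the ARM estimator in \citet{ARM_ICLR2019} so that the identification is a genuine equality of estimators (for a common random draw) and not merely an equality of expectations, and I must verify that the two threshold events arising from the swapped and unswapped $\argmin$ are precisely the antithetic pair $\{[\pi>\sigma(-\phi)],\,[\pi<\sigma(\phi)]\}$ that ARM couples through a single shared $u$.
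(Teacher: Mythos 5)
Your proposal is correct and follows essentially the same route as the paper's own proof: specialize $\mathrm{Dir}(\mathbf{1}_2)$ to $u\sim\mbox{Uniform}(0,1)$, observe that both choices of reference category yield the identical estimate $\tfrac12[f(z)-f(z^{_{2\leftrightharpoons 1}})](1-2u)$ (so the merge step is vacuous and ARSM $=$ ARS), translate the swapped/unswapped $\argmin$'s into the antithetic threshold events $\{u<\sigma(\phi)\}$, $\{u>\sigma(-\phi)\}$, and reduce the two logits to $\phi=\phi_1-\phi_2$ via Property V (the paper does this last step by an explicit chain-rule computation with $\phi=\phi_1-\phi_2$, $\eta=\phi_1+\phi_2$). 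Your write-up is in fact somewhat more explicit than the paper's, which simply asserts the reference-independence of the resulting expression.
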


Detailed derivations and proofs 
 are provided in Appendix~\ref{sec:derivation}. Note for $C=2$, 
Proposition 4 of \citet{ARM_ICLR2019} shows that the ARM estimator is the AR estimator combined with an optimal baseline that is subject to an anti-symmetric constraint. 
When $C>2$, however, such type of theoretical analysis becomes very challenging for both the ARS and ARSM estimators. 
For example, it is even unclear how to define anti-symmetry for categorical variables. Thus in what follows we will focus on empirically evaluating the effectiveness of both ARS and ARSM for variance reduction.

\section{ARSM Estimator for Multivariate, Hierarchical, and Sequential Settings}
This section shows how 
the proposed univariate ARS and ARSM estimators can be generalized into multivariate, hierarchical, and sequential settings. We summarize ARS and ARSM (stochastic) gradient ascent for various types of categorical latent variables in Algorithms \ref{alg:ARSM}-\ref{alg:ARM} of the Appendix. 
\subsection{ARSM for Multivariate Categorical Variables and Stochastic Categorical Network}

We generalize the univariate AR/ARS/ARSM estimators to multivariate ones, which can backpropagate the gradient through a $K$ dimensional vector of $C$-way categorical variables as $\zv=(z_1,\ldots, z_K)$, where $z_k\in\{1,\ldots,{C}\}$. We further generalize them to backpropagate the gradient through multiple stochastic categorical layers, the $t$th layer of which consists of a $K_t$-dimensional $C$-way categorical vector as $\zv_t=(z_{t1},\ldots,z_{tK_t})'\in\{1,\ldots,C\}^{K_t}$.
We defer all the details to Appendix \ref{sec:derivation1} due to space constraint. 

Note for categorical variables, especially in multivariate and/or hierarchical settings, the ARS/ARSM estimators may appear fairly complicated due to their variable-swapping operations. Their implementations, however, are actually relatively straightforward, as shown in Algorithms \ref{alg:ARSM} and \ref{alg:ARSM1} of the Appendix, and the provided Python code. 
\subsection{ARSM for Discrete-Action Policy Optimization}
In RL with a discrete action space with $C$ possible actions, at time $t$, the agent with state $\sv_t$ chooses action $a_t\in\{1,\ldots,C\}$ according to policy 
$$\pi_{\thetav}(a_t\given \sv_t) := \mbox{Cat}(a_t; \sigma(\phiv_t)),~~\phiv_t:=\mathcal{T}_{\thetav}(\sv_t),$$ where 
$\mathcal{T}_{\thetav}(\cdotv)$ denotes a neural network parameterized by~$\thetav$;
the agent receives award $r(\sv_t,a_t)$ at time $t$, and state $\sv_t$ transits to state $\sv_{t+1}$ according to $\mathcal{P}(\sv_{t+1}\given \sv_t,a_t)$. 
With discount parameter $\gamma\in(0,1]$, policy gradient methods optimize $\thetav$ 
to maximize the expected reward $J(\thetav)=\E_{\mathcal{P},\pi_{\thetav}}[\sum_{t=0}^\infty \gamma^{t} r(\sv_{t},a_{t})]$ \citep{sutton2018reinforcement,sutton2000policy, peters2008natural,schulman2015trust}. With $Q(\sv_t,a_t):= \E_{\mathcal{P},\pi_{\thetav}}[\sum_{t'=t}^\infty \gamma^{t'-t} r(\sv_{t'},a_{t'})] $ denoted as the action-value functions, $\hat Q(\sv_t,a_t): =\sum_{t'=t}^\infty \gamma^{t'-t} r(\sv_{t'},a_{t'})$ as their sample estimates, and $\rho_{\pi}(\sv):= \sum_{t=0}^\infty \gamma^t \mathcal{P}(\sv_t=\sv \given \sv_0,\pi_{\thetav})$ as the unnormalized discounted state visitation frequency, the policy gradient via REINFORCE \citep{williams1992simple} can be expressed as
\begin{align}
 \textstyle \small \nabla_{\thetav} J(\thetav) \!=\!\E_{a_t\sim \pi_{\thetav}(a_t| \sv_t),\,\sv_t\sim \rho_{\pi}(\sv)}[\nabla_{\thetav} \ln \pi_{\thetav}(a_t | \sv_t)Q(\sv_t,a_t)].\notag
\end{align}
For variance reduction, one often subtracts state-dependent baselines $b(\sv_t)$ from $\hat Q(\sv_t,a_t)$ \citep{williams1992simple,greensmith2004variance}. In addition, several different action-dependent baselines $b(\sv_t,a_t)$ have been recently proposed \cite{gu2017q,grathwohl2017backpropagation,wu2018variance,liu2018actiondependent}, though their promise in appreciable variance reduction without introducing bias for 
policy gradient has been questioned by \citet{pmlr-v80-tucker18a}. 

Distinct from all previous baseline-based variance reduction methods, in this paper, we develop both the ARS and ARSM policy gradient estimators, which use the 
action-value functions ${Q}(\sv_t,a_t)$ themselves combined with pseudo actions to achieve variance reduction:

\begin{proposition}[ARS/ARSM policy gradient] 
\label{lem1}
The policy gradient $\nabla_{\thetav} J(\thetav)$ 
 can be expressed as
\ba{
\textstyle\small \nabla_{\thetav} J(\thetav)=
 \E_{{\varpiv_t\sim\emph{\text{Dir}}(\mathbf{1}_C)},~\sv_t\sim \rho_{\pi}(\sv)}\big[ \nabla_{\thetav}\sum_{c=1}^C g_{tc} \phi_{tc}\big],
 \label{eq: grad_aggregate}
}
where $\varpiv_t=(\varpi_{t1},\ldots,\varpi_{tC})'$ and $\phi_{tc}$ is the $c$th element of $\phiv_t=\mathcal{T}_{\thetav}(\sv_t) \in \mathbb{R}^C$; under the ARS estimator, we have 
\ba{
g_{tc}
:& \textstyle=
 f_{t\Delta}^{_{{c} \leftrightharpoons j_t}}(\varpiv_t)(1- C\varpi_{tj_t}) ,\notag\\ 
 f_{t\Delta}^{_{{c} \leftrightharpoons j_t}}(\varpiv_t): &= \textstyle
Q(\sv_t,a_t^{_{{c} \leftrightharpoons j_t}})- \frac{1}{C}\sum_{m=1}^C Q(\sv_t,a_t^{_{{m} \leftrightharpoons j_t}}),\notag\\
\textstyle a_t^{_{{c} \leftrightharpoons j_t}}: &\textstyle= \argmin_{i\in\{1,\ldots,C\}} \varpi_{ti}^{_{{c} \leftrightharpoons j_t}} e^{-\phi_{ti}},~~~~~~~~~~~~~~~\label{ARS_RL}
}
where $j_t\in\{1,\ldots,C\}$ is a randomly selected reference category for time step $t$;
under the ARSM estimator, we have 
\ba{\label{ARSM_RL}
&
 g_{tc}
 \textstyle :=\sum_{j=1}^Cf_{t\Delta}^{_{{c} \leftrightharpoons j}}(\varpiv_t)(\frac{1}{C}- \varpi_{tj}) .
}
\end{proposition}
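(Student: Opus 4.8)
The plan is to reduce the policy-gradient claim to the univariate ARS/ARSM results already established, by treating the action-value function $Q(\sv_t,\cdot)$ as the fixed reward $f$ and the logits $\phiv_t=\mathcal{T}_{\thetav}(\sv_t)$ as the softmax parameters, and then propagating the gradient through the network by the chain rule.

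First I would start from the REINFORCE form of the policy gradient, $\nabla_{\thetav} J(\thetav)=\E_{\sv_t\sim\rho_{\pi}}\E_{a_t\sim\pi_{\thetav}}[\nabla_{\thetav}\ln\pi_{\thetav}(a_t|\sv_t)\,Q(\sv_t,a_t)]$, and condition on a fixed state $\sv_t$. For fixed $\sv_t$, write $\mathcal{E}_t(\phiv_t):=\E_{a_t\sim\text{Cat}(\sigma(\phiv_t))}[Q(\sv_t,a_t)]=\sum_{c} Q(\sv_t,c)\sigma(\phiv_t)_c$, in which $Q(\sv_t,\cdot)$ is held fixed; this is legitimate because the policy-gradient theorem has already absorbed the implicit $\thetav$-dependence of $Q$ and of $\rho_{\pi}$. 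Since the score of the categorical policy is $\nabla_{\phi_{tc}}\ln\text{Cat}(a_t;\sigma(\phiv_t))=\mathbf{1}_{[a_t=c]}-\sigma(\phiv_t)_c$, the inner action-expectation is exactly the categorical REINFORCE identity \eqref{eq:REINFORCE}, so $\E_{a_t}[(\mathbf{1}_{[a_t=c]}-\sigma(\phiv_t)_c)Q(\sv_t,a_t)]=\nabla_{\phi_{tc}}\mathcal{E}_t(\phiv_t)$.

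Next I would invoke the univariate ARS estimator in \eqref{eq:ARM-cat} (respectively the ARSM estimator in \eqref{eq:ARM-cat2}) with $f=Q(\sv_t,\cdot)$, $\phiv=\phiv_t$, and Dirichlet noise $\varpiv_t\sim\text{Dir}(\mathbf{1}_C)$. Identifying the pseudo actions $z^{c\leftrightharpoons j}=a_t^{c\leftrightharpoons j}$ and the centered quantity $f(z^{c\leftrightharpoons j})-\tfrac{1}{C}\sum_m f(z^{m\leftrightharpoons j})$ with $f_{t\Delta}^{c\leftrightharpoons j}(\varpiv_t)$ gives $\nabla_{\phi_{tc}}\mathcal{E}_t(\phiv_t)=\E_{\varpiv_t}[g_{tc}]$ with $g_{tc}$ as in \eqref{ARS_RL} or \eqref{ARSM_RL}; for ARS I would additionally average over the uniformly random reference $j_t$, which is unbiased since each fixed reference already is. The chain rule through $\mathcal{T}_{\thetav}$ then yields $\nabla_{\thetav}\mathcal{E}_t(\phiv_t)=\sum_c\nabla_{\phi_{tc}}\mathcal{E}_t(\phiv_t)\,\nabla_{\thetav}\phi_{tc}=\E_{\varpiv_t}[\sum_c g_{tc}\nabla_{\thetav}\phi_{tc}]$, and since $g_{tc}$ carries no $\thetav$-gradient at this stage (it is a detached coefficient), $\sum_c g_{tc}\nabla_{\thetav}\phi_{tc}=\nabla_{\thetav}\sum_c g_{tc}\phi_{tc}$. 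Re-inserting the outer expectation over $\sv_t\sim\rho_{\pi}$ recovers \eqref{eq: grad_aggregate}.

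The main obstacle is the bookkeeping around what is and is not differentiated. I must be careful that the only $\thetav$-dependence I differentiate is the explicit one entering the policy through $\phiv_t=\mathcal{T}_{\thetav}(\sv_t)$, matching exactly the content of the policy-gradient theorem; the $g_{tc}$ coefficients, which themselves depend on $\phiv_t$ through the $\argmin$ defining the pseudo actions, must be treated as constants (a \emph{stop-gradient}), otherwise spurious terms $\phi_{tc}\nabla_{\thetav}g_{tc}$ appear. Beyond this I would invoke the standard regularity conditions to interchange $\nabla_{\thetav}$ with the two expectations, but those are routine; the substantive work is the reduction to the univariate theorems and the correct identification of $f=Q(\sv_t,\cdot)$, which makes the swap/merge construction apply verbatim at each state.
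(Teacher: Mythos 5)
Your proof is correct, and its core coincides with the paper's: both apply the univariate ARS/ARSM estimator (Theorem \ref{theorem 3}, i.e.\ \eqref{eq:ARM-cat} and \eqref{eq:ARM-cat2}) state by state with $f=Q(\sv_t,\cdot)$ held fixed, and then aggregate through the chain rule with the coefficients $g_{tc}$ detached, which is exactly your stop-gradient bookkeeping. The difference is in how the per-state statement is reached. You import the policy-gradient theorem as a black box and run the REINFORCE identity \eqref{eq:REINFORCE} backwards to identify $\E_{a_t}\big[(\mathbf{1}_{[a_t=c]}-\sigma(\phiv_t)_c)\,Q(\sv_t,a_t)\big]$ with $\nabla_{\phi_{tc}}\E_{a_t}[Q(\sv_t,a_t)]$, $Q$ fixed. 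The paper does not assume that theorem: it writes out $J(\phiv_{0:\infty})$, splits the return at time $t$ as $\sum_{t'<t}\gamma^{t'}r(\sv_{t'},a_{t'})+\gamma^{t}Q(\sv_t,a_t)$ in \eqref{eq:decompose}, observes that the first part has zero gradient with respect to $\phiv_t$, and thereby obtains $\nabla_{\phi_{tc}}J$ as $\E_{\mathcal{P}(\sv_{t}\given \sv_0,\pi_{\thetav})\mathcal{P}(\sv_{0})}\big[\gamma^{t}\,\nabla_{\phi_{tc}}\E_{a_t}[Q(\sv_t,a_t)]\big]$ in \eqref{eq:decompose1}--\eqref{eq:decompose2} before invoking Theorem \ref{theorem 3}, finally collecting the $\gamma^t$-weighted sum over $t$ into $\rho_{\pi}$. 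Your route is shorter and is consistent with the form of the policy gradient the paper itself quotes in the main text; the paper's route buys self-containedness and, notably, it is what directly licenses the step you rightly flag as delicate---that only the explicit dependence through $\phiv_t=\mathcal{T}_{\thetav}(\sv_t)$ is differentiated while $Q$, the visitation measure, and $g_{tc}$ are treated as constants---rather than inheriting that convention from a cited theorem.
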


Note as the number of unique actions among $a_t^{_{{m} \leftrightharpoons j}}$ is as few as one, in which case the ARS/ARSM gradient is zero and there is no need at all to estimate the $Q$ function, and as many as $C$, in which case one needs to estimate the $Q$ function $C$ times.
Thus if the computation of estimating $Q$ once is $O(1)$, then the worst computation for an episode that lasts $T$ time steps before termination is $O(TC)$. {Usually the number of distinct pseudo actions will decrease dramatically as the training progresses. We illustrate this in Figure \ref{fig:entropy}, where we show the trace of categorical variable's entropy and number of distinct pseudo actions that differ from the true action.}
 Examining \eqref{ARS_RL} and \eqref{ARSM_RL} shows that the ARS/ARSM policy gradient estimator can be intuitively understood as a ``try-and-see self-critic'' method, which eliminates the need of constructing baselines and estimating their parameters for variance reduction. To decide the gradient direction of whether increasing the probability of action $c$ at a given state, it compares the pseudo-action reward $ Q(\sv_t,a_t^{_{{c} \leftrightharpoons j}})$ with the average of all pseudo-action rewards $\{ Q(\sv_t,a_t^{_{{m} \leftrightharpoons j}})\}_{m=1,C}$. 
If the current policy is very confident on taking action $a_t$ at state $\sv_t$, which means $\phi_{ta_t}$ dominates the other $C-1$ elements of $\phiv_t=\mathcal{T}_{\thetav}(\sv_t)$, then it is very likely that $a_t^{_{{m} \leftrightharpoons j_t}}=a_t$ for all $m$, which will lead to zero gradient at time $t$. On the contrary, if the current policy is uncertain about which action to choose, then more pseudo actions that are different from the true action are likely to be generated. 
This mechanism encourages exploration when the policy is uncertain, and balance the tradeoff of exploration and exploitation intrinsically. It also explains our empirical observations that ARS/ARSM tends to generate a large number of unique pseudo actions in the early stages of training, leading to fast convergence, and significantly reduced number 
once the policy becomes sufficiently certain, leading to stable performance after convergence. 

\begin{figure*}[th]
 \centering
 \includegraphics[width=0.89\textwidth,height=6.2cm]{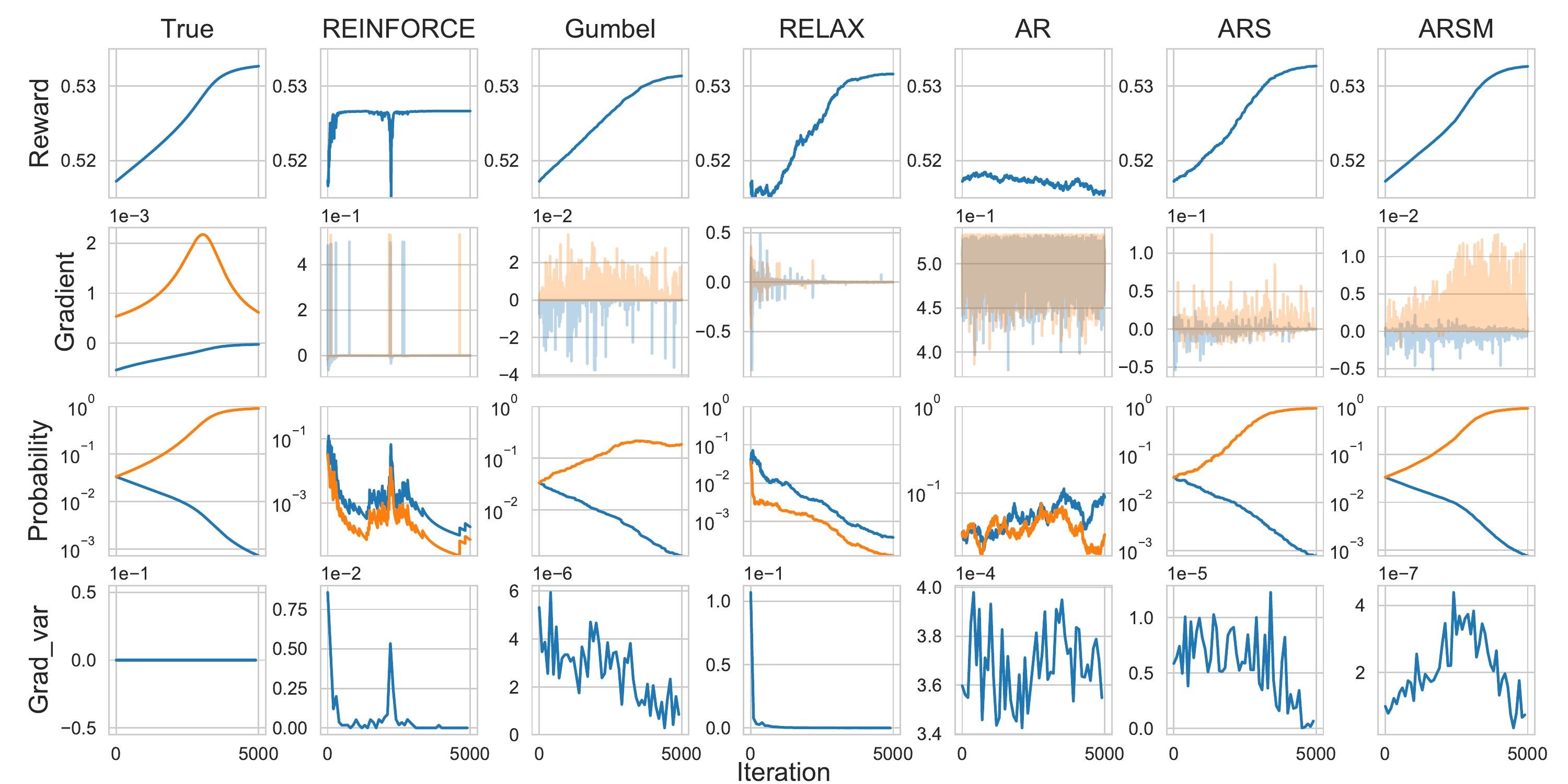}\vspace{-4.5mm}
 \caption{Comparison of a variety of gradient estimators in maximizing \eqref{eq:toy}. The optimal solution is $\sigma(\phiv) = (0, \ldots, 1)$, which means $z = C$ with probability one. The reward is computed analytically by $\E_{z\sim\text{Cat}(\sigma(\phiv))}[f(z)]$ with maximum as $0.533$. Rows 1, 2, and 3 show the trace plots of reward $\E[f(z)]$, the gradients with respect to $\phi_1$ and $\phi_C$, and the probabilities $\sigma(\phiv)_1$ and $ \sigma(\phiv)_C$, respectively. Row 4 shows the gradient variance estimation with 100 Monte Carlo samples at each iteration, averaged over categories $1$ to $C$.}\vspace{-3mm}
 \label{fig:toy}
\end{figure*}

\section{Experimental Results}

In this section, we use a toy example for illustration, demonstrate both multivariate and hierarchical
settings with categorical latent variable models, and demonstrate the sequential setting with discrete-action policy optimization. Comparison of gradient variance between various algorithms can be found in Figures \ref{fig:toy} and \ref{fig:rl}-\ref{fig:vae_var}.

 \subsection{Example Results on Toy Data}

 To illustrate the working mechanism of the ARSM estimator, we consider learning $\phiv\in\mathbb{R}^C$ to maximize 
 \ba{
\textstyle &\E_{z\sim\text{Cat}(\sigma(\phiv))}[f(z)], ~~f(z):=0.5+z/(CR) , \label{eq:toy}
 }
 where $z\in\{1,\ldots,C\}$. The optimal solution is $\sigma(\phiv)=(0,\ldots,0,1)$, which leads to the maximum expected reward of $0.5+1/R$. 
The larger the $C$ and/or $R$ are, the more challenging the optimization becomes. 
 We first set $C=R=30$ that are small enough to allow existing algorithms to perform reasonably well. Further increasing $C$ or $R$ will often fail existing algorithms and ARS, while ARSM always performs almost as good as the true gradient when used in optimization via gradient ascent. We include the results for $C = 1,000$ and $10,000$ in Figures \ref{fig:toy1000} and \ref{fig:toy10000} of the Appendix.
 
 We perform an ablation study of the proposed AR, ARS, and ARSM estimators. We also make comparison to two representative low-variance estimators, including the biased Gumbel-Softmax estimator \citep{jang2016categorical,maddison2016concrete} that applies the reparameterization trick after continuous relaxation of categorical variables, and the unbiased RELAX estimator of \citet{grathwohl2017backpropagation} that combines reparameterization and REINFORCE with an adaptively estimated baseline. 
 We compare them in terms of the expected reward as $\sum_{c=1}^C \sigma(\phiv)_cf(c)$, gradients for $\phi_c$, probabilities $\sigma(\phiv)_c$, and gradient variance. Note when $C=2$, both ARS and ARSM reduce to the ARM estimator, which has been shown in \citet{ARM_ICLR2019} to outperform a wide variety of estimators for binary variables, including the REBAR estimator of \citet{tucker2017rebar}. 
The true gradient in this example can be computed analytically as in~\eqref{eq:CatGrad}. All estimators in comparison use a single Monte Carlo sample for gradient estimation. We initialize $\phi_c=0$ for all $c$ and fix the gradient-ascent stepsize as one. 

As shown in Figure \ref{fig:toy}, without appropriate variance reduction, both AR and REINFORCE either fail to converge or converge to a low-reward solution. We notice RELAX for $C=R=30$ is not that stable across different runs; 
in this particular run, it manages to obtain a relatively high reward, but its probabilities converge towards a solution that is different from the optimum $\sigma(\phiv)=(0,\ldots,0,1)$. By contrast, Gumbel-Softmax, ARS, and ARSM all robustly reach probabilities close to the optimum $\sigma(\phiv)=(0,\ldots,0,1)$ after 5000 iterations across all random trials. The gradient variance of ARSM is about one to four magnitudes less than these of the other estimators, which helps explain why ARSM is almost identical to the true gradient 
in moving $\sigma(\phiv)$ towards the optimum that maximizes the expected reward. 
The advantages of ARSM become even clearer in more complex settings where analytic gradients become intractable to compute, as shown below.

 \subsection{Categorical Variational Auto-Encoders}

For optimization involving expectations with respect to multivariate categorical variables, we consider a variational auto-encoder (VAE) with a single categorical stochastic hidden layer. We further consider a categorical VAE with two categorical stochastic hidden layers to illustrate optimization involving expectations with respect to hierarchical multivariate categorical variables.

Following \citet{jang2016categorical}, we consider a VAE with a categorical hidden layer to model $D$-dimensional binary observations. The decoder parameterized by $\thetav$ is expressed as
$
 p_{\thetav}(\xv \given \zv) = \prod_{i=1}^D p_{\thetav}(x_i \given \zv) $, where $\zv\in\{1,\ldots,C\}^K$ is a $K$-dimensional $C$-way categorical vector
and $p_{\thetav}(x_i \given \zv)$ is Bernoulli distributed. The encoder parameterized by $\phiv$ is expressed as $q_{\phiv}(\zv \given \xv) \textstyle = \prod_{k=1}^K q_{\phiv}(z_k \given \xv)$. We set the prior as $p(z_k=c)=1/C$ for all $c$ and $k$.
For optimization, we maximize the evidence lower bound (ELBO) as 
\ba{
\mathcal{L}(\xv) \textstyle = \bE_{\zv \sim q_{\phiv}(\zv\given \xv)}\big[ \ln\ \frac{p_{\thetav}(\xv\given \zv)p(\zv)}{q_{\phiv}(\zv\given \xv)}\big]. 
} 

We also consider a two-categorical-hidden-layer VAE, whose encoder and decoder are constructed as
\bas{
&q_{\phiv_{1:2}}(\zv_1,\zv_2 \given \xv) \textstyle = q_{\phiv_1}(\zv_1 \given \xv)q_{\phiv_{2}}(\zv_2 \given \zv_1), \\
&p_{\thetav_{1:2}}(\xv \given \zv_1,\zv_2) \textstyle = p_{\thetav_{1}}(\xv\given \zv_1)p_{\thetav_{2}}(\zv_1 \given \zv_2), 
}
where $\zv_1,\zv_2\in\{1,\ldots,C\}^K$. The 
 ELBO is expressed as
\ba{\small
\mathcal{L}(\xv) \textstyle = \bE_{q_{\phiv_{1:2}}(\zv_1,\zv_2 \given \xv)} \Big[\ln \frac{p_{\thetav_{1}}(\xv\given \zv_1)p_{\thetav_{2}}(\zv_1\given \zv_2) p(\zv_2)}{q_{\phiv_{1}}(\zv_1 \given \xv)q_{\phiv_{2}}(\zv_2 \given \zv_1)}\Big].
}

\begin{table*}[th]
\small
 \caption{\small Comparison of training and testing negative ELBOs (nats) on binarized MNIST between ARSM and various gradient estimators.
 }\label{tab:vae}
 \centering
{
 \begin{tabular}{c@{\hskip8pt}cc@{\hskip7pt}c@{\hskip7pt}c@{\hskip7pt}c@{\hskip7pt}c||@{\hskip7pt}c@{\hskip7pt}cc}
 \toprule
 Gradient estimator & REINFORCE & RELAX & ST Gumbel-S. & AR & ARS & ARSM & Gumbel-S.-2layer & ARSM-2layer\\ 
 \midrule
$ - $ELBO (Training) & 127.0 & 117.4 & 94.1 & 133.6 & 97.4 & {82.0} & 91.3 & \textbf{78.3}\\ 
 \midrule
$- $ELBO (Testing) & 127.6 & 118.7 & 96.4 & 135.0 & 101.4 & \textbf{86.7} & 98.3 & {89.5} \\ 
 \bottomrule
 \end{tabular}}
 \vspace{-4mm}
\end{table*}
 \begin{figure}[th]
 \centering
 \includegraphics[width=0.44\textwidth,height=4.5cm]{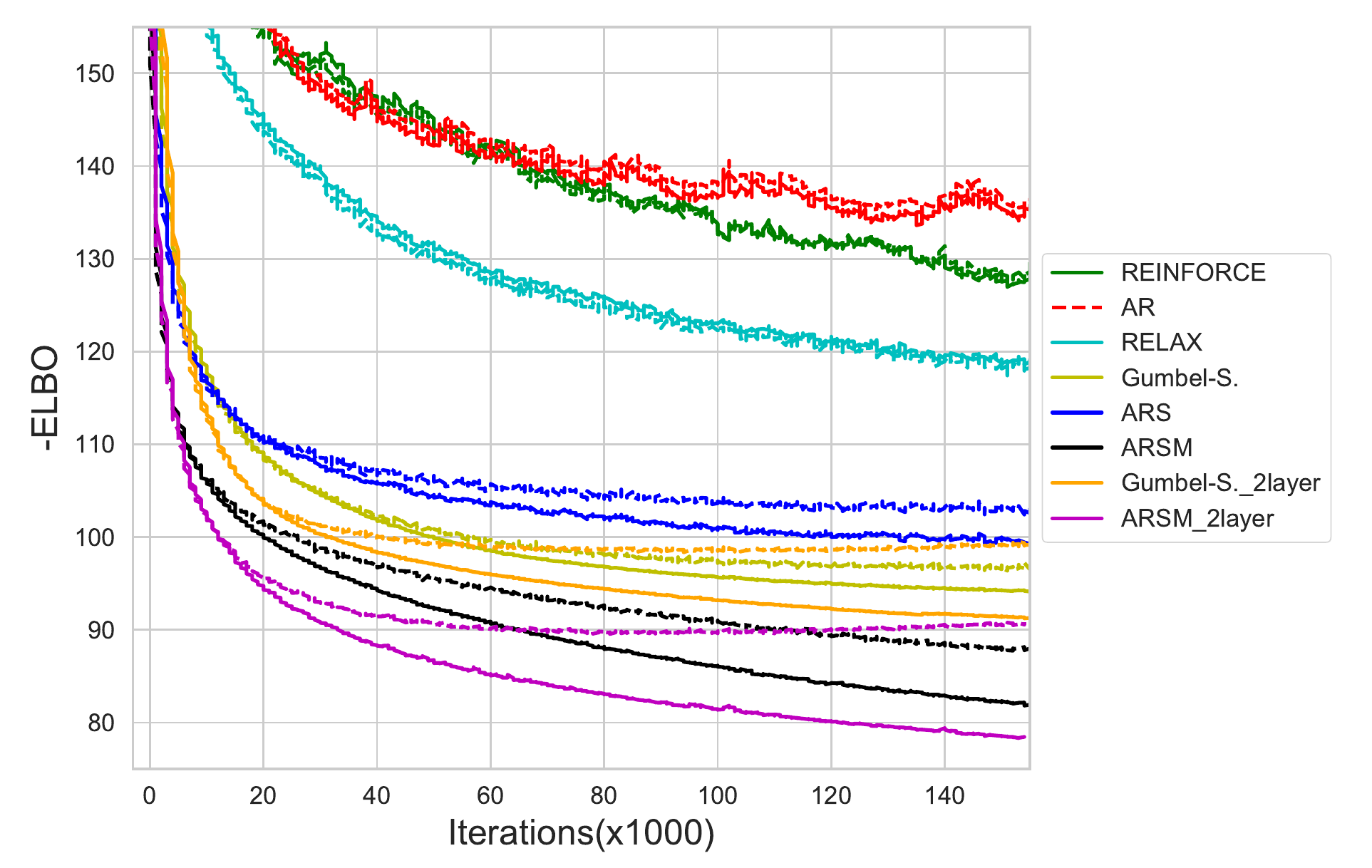} \vspace{-4.5mm}
 \caption{Plots of negative ELBOs (nats) on binarized MNIST against training iterations (analogous ones against times are shown in Figure \ref{fig:vae_time}). The solid and dash lines correspond to the training and validation, respectively (best viewed in color).}
 \label{fig:vae}
 \vspace{-2mm}
\end{figure}

For both categorical VAEs, we set $K = 20$ and $C = 10$. We train them on a binarized MNIST dataset as in \citet{van2017neural} by thresholding each pixel value at 0.5. 
Implementations of the VAEs with one and two categorical hidden layers are summarized in Algorithms \ref{alg:ARSM} and \ref{alg:ARSM1}, respectively; see the provided code for more details.

We consider the AR, ARS, and ARSM estimators, and include the REINFORCE \citep{williams1992simple}, Gumbel-Softmax \citep{jang2016categorical}, and RELAX \citep{grathwohl2017backpropagation} estimators for comparison. We note that \citet{jang2016categorical} has already shown Gumbel-Softmax outperforms a wide variety of previously proposed estimators; see \citet{jang2016categorical} and the references therein for more details.

We present the trace plots of the training and validation negative ELBOs in Figure~\ref{fig:vae} and gradient variance in Figure~\ref{fig:vae_var}. The numerical values are summarized in Table \ref{tab:vae}. We use 
the Gumbel-Softmax code \footnote{\href{https://github.com/ericjang/gumbel-softmax}{https://github.com/ericjang/gumbel-softmax}} to obtain the results of the VAE with a single categorical hidden layer, and modify it with our best effort for the VAE with two categorical hidden layers; we modify the RELAX code \footnote{\href{https://github.com/duvenaud/relax}{https://github.com/duvenaud/relax}} with our best effort to allow it to optimize VAE with a single categorical hidden layer. 
For the single-hidden-layer VAE, we connect its latent categorical layer $\zv$ and observation layer $\xv$ with two nonlinear deterministic layers; for the two-hidden-layer VAE, we add an additional categorical hidden layer $\zv_2$ that is linearly connected to the first one. See Table \ref{tab:Network} of the Appendix for detailed network architectures. In our experiments, all methods use exactly the same network architectures and data, set the mini-batch size as 200, and are trained by the Adam optimizer \cite{kingma2014adam}, whose learning rate is selected from $\{1,2,\ldots,5\}\times 10^{-4}$ 
using the validation set. We notice for the same model, a large learning rate can result in reduced training loss but increased testing loss, which suggests overfitting.
 
The results in Table \ref{tab:vae} and Figure~\ref{fig:vae} clearly show that for optimizing the single-categorical-hidden-layer VAE, {both ARS and ARSM estimators 
perform well in terms of} both training and testing ELBOs. In particular, ARSM outperforms all the other estimators by a large margin. We also consider Gumbel-Softmax by computing its gradient with 25 Monte Carlo samples, making it run as fast as the provided ARSM code does per iteration. In this case, both algorithms take similar time but ARSM achieves $-$ELBOs for the training and testing sets as $82.0$ and $86.7$, respectively, while those of Gumbel-Softmax are $93.6$ and $95.9$, respectively.  The performance gain of ARSM can be explained by both its unbiasedness and a clearly lower variance exhibited by its gradient estimates in comparison to all the other estimators, as shown in Figure \ref{fig:vae_var} of the Appendix. The results on the two-categorical-hidden-layer VAE, which adds a linear categorical layer on top of the single-categorical-hidden-layer VAE, {also suggest that ARSM outperforms the biased Gumbel-Softmax estimator.}

\subsection{Maximum Likelihood Estimation for a Stochastic Categorical Network}

Denoting $\xv_l,\xv_u\in\mathbb{R}^{392}$ 
as the lower and upper halves of an MNIST digit, respectively, 
we consider a standard benchmark task of estimating the conditional distribution $p_{\thetav_{0:2}}(\xv_l\given \xv_u)$ \citep{raiko2014techniques,bengio2013estimating,gu2015muprop,jang2016categorical,tucker2017rebar}. We consider
a stochastic categorical network with two stochastic categorical hidden layers, expressed as
\bas{
\xv_l&\sim \mbox{Bernoulli}( \sigma(\mathcal{T}_{\thetav_0} (\bv_1))),\\
\bv_1&\textstyle \sim \prod_{c=1}^{20} \mbox{Cat}(b_{1c}; \sigma(\mathcal{T}_{\thetav_1} (\bv_2)_{[10(c-1)+(1:10)]})),\\
\bv_2&\textstyle \sim \prod_{c=1}^{20} \mbox{Cat}(b_{2c}; \sigma(\mathcal{T}_{\thetav_2} (\xv_u)_{[10(c-1)+(1:10)]})),
}
where both $\bv_1$ and $\bv_2$ are 20-dimensional 10-way categorical variables, $\mathcal{T}_{\thetav}(\cdotv)$ denotes linear transform, $\mathcal{T}_{\thetav_2} (\xv_u)_{[10(c-1)+(1:10)]}$ is a 10-dimensional vector consisting of elements $10(c-1)+1$ to $10c$ of $\mathcal{T}_{\thetav_2} (\xv_u)\in\mathbb{R}^{200}$, $\mathcal{T}_{\thetav_1} (\bv_2)\in\mathbb{R}^{200}$, and $\mathcal{T}_{\thetav_0}\in\mathbb{R}^{392}$. Thus we can consider the network structure as 392-200-200-392, making the results directly comparable with these in \citet {jang2016categorical} for stochastic categorical network. We approximate 
 $\log p_{\thetav_{0:2}}(\xv_l\given \xv_u)$ 
 with $K$ Monte Carlo samples as \ba{\textstyle \log \frac{1}{K}\sum_{k=1}^K 
 \mbox{Bernoulli}(\xv_l; \sigma(\mathcal{T}_{\thetav_0} ( \bv_1^{(k)}))),\label{eq:ML}} 
 where 
$ \bv_1^{(k)}\textstyle \sim \prod_{c=1}^{20} \mbox{Cat}(b_{1c}^{(k)}; \sigma(\mathcal{T}_{\thetav_1} (\bv_2^{(k)})_{[10(c-1)+(1:10)]}))$, 
$\bv_2^{(k)}\textstyle \sim \prod_{c=1}^{20} \mbox{Cat}(b_{2c}^{(k)}; \sigma(\mathcal{T}_{\thetav_2} (\xv_u)_{[10(c-1)+(1:10)]}))$. We perform training with $K=1$, which can also be considered as optimizing on a single-Monte-Carlo-sample estimate of the lower bound of the log marginal likelihood. 
We use Adam \citep{kingma2014adam}, with the learning rate set as $10^{-4}$, mini-batch size as 100, and number of training epochs as 2000. 
 Given the inferred point estimate of $\thetav_{0:2}$, 
 we evaluate the accuracy of conditional density estimation by estimating the negative log-likelihood $-\log p_{\thetav_{0:2}}(\xv_l\given \xv_u)$ using \eqref{eq:ML}, averaging over the test set with $K=1000$.

 \begin{table}[t]
\small\vspace{-2.5mm}
 \caption{\small Comparison of the test negative log-likelihoods between ARSM and various gradient estimators in \citet{jang2016categorical}, for the MNIST conditional distribution estimation benchmark task. }\label{tab:SBN}
 \centering
{\small
 \begin{tabular}{c@{\hskip7pt}cc@{\hskip6pt}c@{\hskip6pt}c@{\hskip6pt}}
 \toprule
 Gradient estimator & ARSM & ST & Gumbel-S. & MuProp \\
 \midrule
 $-\log p(\xv_l\given \xv_u)$ & \textbf{58.3 $\pm$ 0.2} & 61.8 & 59.7 & 63.0 \\
 \bottomrule
 \end{tabular}}\vspace{-4mm}
\end{table}
 
 As shown in Table \ref{tab:SBN}, optimizing a stochastic categorical network with the ARSM estimator 
 achieves the lowest test negative log-likelihood, outperforming all previously proposed 
 gradient estimators on the same structured stochastic networks, including straight through (ST) \citep{bengio2013estimating} and ST Gumbel-Softmax \citep{jang2016categorical} that are biased, and MuProp \citep{gu2015muprop} that is unbiased.

 \subsection{Discrete-Action Policy Optimization}

 \begin{figure*}[t]
 \centering
 \includegraphics[width=0.72\textwidth,height=5.5cm]{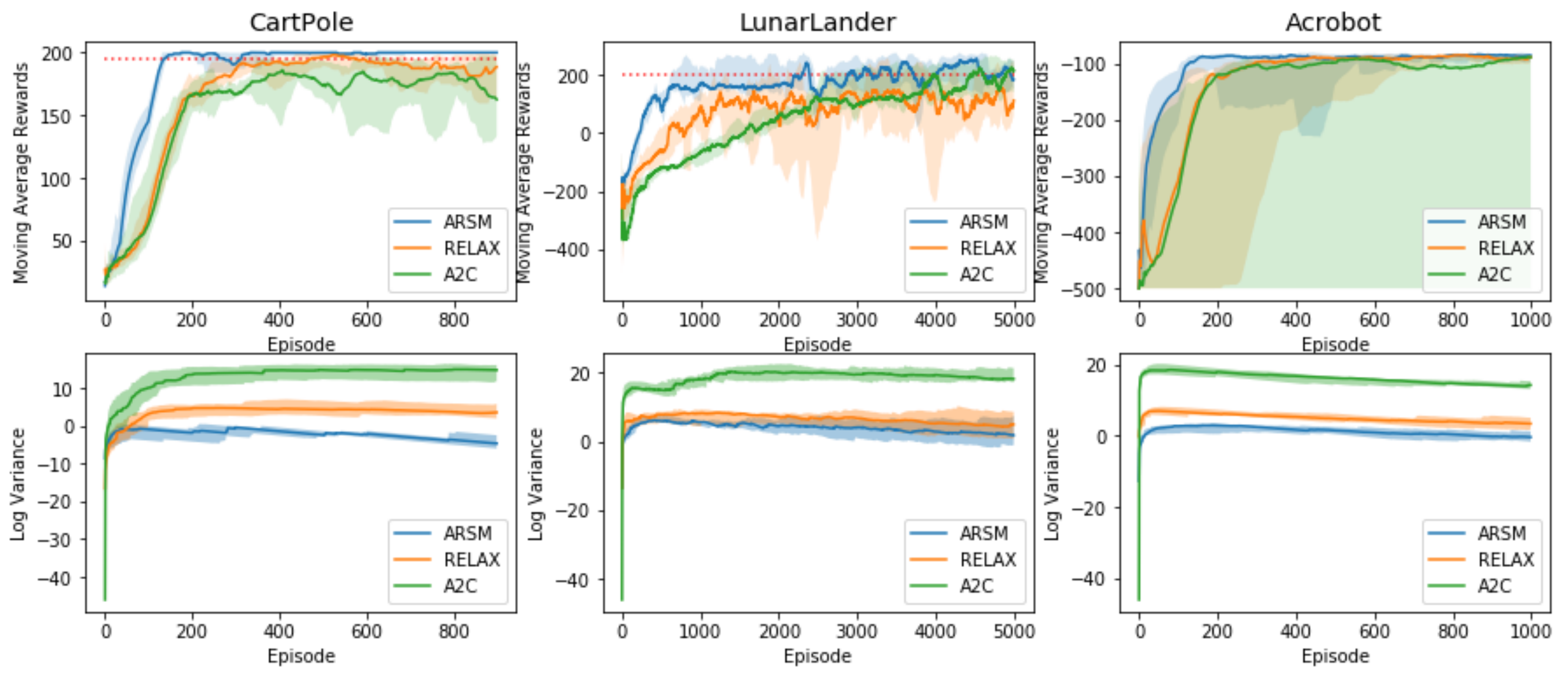}
 \vspace{-4.5mm}
 \caption{\textsl{Top row:} Moving average reward curves. \textsl{Bottom row:} Log-variance of gradient estimator. In each plot, the solid lines are the median value of ten  independent runs (ten different random seeds for random initializations). The opaque bars are $10$th and $90$th percentiles. Dashed straight lines in Cart Pole and Lunar Lander represent task-completion criteria.}
 \label{fig:rl} \vspace{-3mm}
\end{figure*}

 The key of applying the ARSM policy gradient shown in \eqref{ARSM_RL} is to provide, under the current policy $\pi_{\thetav}$, the action-value functions' sample estimates $\hat Q(\sv_t,a_t): =\sum_{t'=t}^\infty \gamma^{t'-t} r(\sv_{t'},a_{t'})$ 
 for all unique values in $\{a_t^{_{{c} \leftrightharpoons j}}\}_{c,j} $. 
 Thus ARSM is somewhat related to the \textit{vine} method proposed in \citet{schulman2015trust}, 
 which defines a heuristic rollout policy that chooses a subset of the states along the true trajectory as the ``rollout set,'' samples $K$ pseudo actions uniformly at random from the discrete-action set at each state of the rollout set, and performs a single rollout for each state-pseudo-action-pair to estimate its action-value function~$Q$. ARSM chooses its rollout set in the same manner, but is distinct from the \textit{vine} method in having a rigorously derived rollout policy: it swaps the elements of $ \varpiv_{t}\sim\mbox{Dir}(\mathbf{1}_C)$ to generate pseudo actions if state $\sv_t$ belongs to the rollout set; the number of unique pseudo actions that are different from the true action $a_t$ is a random number, which is positively related to the uncertainty of the policy and hence often negatively related to its convergence; and a single rollout is then performed for each of these unique pseudo actions to estimate its~$ Q$. 
 
As ARSM requires the estimation of $Q$ function for each unique state-pseudo-action pair using Monte Carlo rollout, 
 it could have high computational complexity if (1) the number of unique pseudo actions is large, and (2) each rollout takes many expensive steps (interactions with the environments) before termination. %
However, there exist ready solutions and many potential ones. As given a true trajectory, all the state-pseudo-action rollouts of ARSM can be independently simulated and hence all pseudo-action related $Q$'s can be estimated in an embarrassingly parallel manner.
Furthermore, in addition to Monte Carlo estimation, we can potentially adapt for ARSM a wide variety of off-the-shelf action-value function estimation methods \citep{sutton2018reinforcement}, to either accelerate the estimation of $ Q$ or further reduce the variance (though possibly at the expense of introducing bias). 
 In our experiment, for simplicity and clarity, 
 we choose to use Monte Carlo estimation to obtain $\hat Q$ for both the true trajectory and all state-pseudo-action rollouts. 
 The results for RELAX and A2C are obtained by running the code provided by 
 \citet{grathwohl2017backpropagation}\footnote{\href{https://github.com/wgrathwohl/BackpropThroughTheVoidRL}{https://github.com/wgrathwohl/BackpropThroughTheVoidRL}}.

 We apply the ARSM policy gradient to three representative RL tasks with discrete actions, including the Cart Pole, Acrobot, and Lunar Lander environments provided by OpenAI Gym \citep{brockman2016openai}, and compare it with advantage actor-critic algorithm (A2C) \citep{sutton2000policy} and RELAX \citep{grathwohl2017backpropagation}. We report the moving-average rewards 
 and the estimated log-variance of the gradient estimator at every episode; for each episode, the reward score is obtained by running the updated policy on a new random environment; and the variance is obtained by first applying
exponential moving averages to the first and second moments of each neural network parameter with decay $0.99$, and then taking the average of the estimated variances of all neural network parameters. 
 
Shown in Figure \ref{fig:rl} are the mean rewards over the last 100 steps; the opaque bar indicates $10$th and $90$th percentiles obtained by ten independent runs for each method (using $10$ different random seeds for random initializations); the solid line is the median value of these ten independent runs. ARSM outperform both baselines in all three tasks in terms of stability, moving average rewards, and log-variance of gradient estimator. All methods are cross validated by optimizers \{Adam Optimizer, RMSProp Optimizer\} and learning rates $\{1, 3, 10, 30\}\times 10^{-3}$. 
Both the policy and critic networks for A2C and RELAX have two 10-unit hidden layers with ReLU activation functions \citep{nair2010rectified}. The discount factor $\gamma$ is $0.99$ and entropy term is $0.01$. The policy network of ARSM is the same as that of A2C and RELAX, and 
the maximum number of allowed state-pseudo-action rollouts of ARSM 
is set as $16$, $64$, and $1024$ for Cart Pole, Acrobot, and Lunar Lander, respectively; see Algorithm \ref{alg:ARM} and the provided code for more details.  Using our current implementation that has not been optimized to fully take the advantage of parallel computing, to finish the number of episodes as in Figure \ref{fig:rl}, ARSM on average takes 677, 425, and 19050 seconds for CartPole, Acrobot, and LunarLander, respectively. For comparison,  for these three tasks, RELAX on average takes 139, 172, and 3493 seconds and A2C on average takes 92, 120, and 2708 seconds.

\section{Conclusion}
 
To backpropagate the gradients through categorical stochastic layers, 
we propose the augment-REINFORCE-swap-merge (ARSM) estimator that is unbiased and exhibits low variance. 
The performance of ARSM is almost identical to that of the true gradient when used for optimization involving a $C$-way categorical variable, even when $C$ is very large (such as $C=10,000$). For multiple $C$-way categorical variables organized into a single stochastic layer, multiple stochastic layers, or a sequential setting, the ARSM estimator clearly outperforms state-of-the-art methods, as shown in our experimental results for both categorical latent variable models and discrete-action policy optimization. 
We attribute the outstanding performance of ARSM to both its unbiasedness and its ability to control variance by simply combing its reward function with randomly generated pseudo actions, where the number of unique pseudo actions is positively related to the uncertainties of categorical distributions and hence negatively correlated to how well the optimization algorithm has converged; there is no more need to construct separate baselines and estimate their parameters, which also help make the optimization more robust. Some natural extensions of the proposed ARSM
estimator include applying it to reinforcement
learning with high-dimensional discrete-action spaces or multiple discrete-action agents, and various tasks in natural language processing such as sentence generation and machine translation.

\section*{Acknowledgements}
This research was supported in part by 
 Award IIS-1812699 from the U.S. National Science Foundation
and the McCombs Research Excellence Grant. The authors acknowledge the support
of NVIDIA Corporation with the donation of the Titan Xp GPU used for this research, and the
computational support of Texas Advanced Computing Center.

\bibliographystyle{icml2019}
\bibliography{reference2.bib,References052016_2.bib,References052016.bib,reference.bib}

\newpage
\appendix

\onecolumn

\begin{center}{\large{\textbf{ARSM: Augment-REINFORCE-Swap-Merge Gradient for Categorical Variables\\ \vspace{2mm} Supplementary Material}}}\end{center}

\section{Derivation of AR, ARS, and ARSM}\label{sec:derivation}
\subsection{Augmentation of a Categorical Variable}\label{sec:aug}

Let us denote $\tau\sim\mbox{Exp}(\lambda)$ as the exponential distribution, with probability density function $p(\tau\given \lambda)=\lambda e^{-\lambda \tau}$, where $\lambda>0$ and $\tau>0$. Its mean and variance are $\E[\tau] =\lambda^{-1} $ and $\mbox{var}[\tau]=\lambda^{-2}$, respectively. It is well known that, e.g. in \citet {Ross:2006:IPM:1197141}, if $\tau_i\sim\mbox{Exp}(\lambda_i)$ are independent exponential random variables for $i=1,\ldots,{C}$, then the probability that $\tau_z$, where $ {z\in\{1,\ldots,{C}\}}$, is the smallest can be expressed as
\ba{
\textstyle P\big(z = \argmin\nolimits_{{i\in\{1,\ldots,{C}\}}} \tau_i\big) = P\left(\tau_z< \tau_i,~ \forall~ i\neq z\right) = 
\frac{\lambda_z}{\sum\nolimits_{i=1}^{C} \lambda_i}~~. \label{eq:ExpRace}
}
Note this property, referred to as ``exponential racing'' in \citet{Lomax_2018}, is closely related to the Gumbel distribution (also known as Type-I extreme-value distribution) based latent-utility-maximization representation of multinomial logistic regression \citep{McFadden74,book_train_2009}, as well as the Gumbel-softmax trick \citep{maddison2016concrete,jang2016categorical}. This is because the exponential random variable $\tau\sim\mbox{Exp}(\lambda)$ can be reparameterized as $\tau = \epsilon/\lambda,~\epsilon\sim\mbox{Exp}(1) $, where $\epsilon\sim\mbox{Exp}(1) $ can be equivalently generated as $\epsilon=-\log u,~u\sim \mbox{Uniform}(0,1)$, and hence 
we have 
$$\argmin\nolimits_i \tau_i \stackrel{d} = \argmin\nolimits_i \{-\log u_i/\lambda_i\} = \argmax\nolimits_i \{\log \lambda_i - \log(-\log u_i)\} , $$
where $\tau_i\sim\mbox{Exp}(\lambda_i)$, ``$\stackrel{d} =$'' denotes ``equal in distribution,'' and $u_i\stackrel{iid}\sim \mbox{Uniform}(0,1)$; note that if $u\sim \mbox{Uniform}(0,1)$, then $- \log(-\log u)$ follows the Gumbel distribution \citep{book_train_2009}.

From \eqref{eq:ExpRace} we know that if
\ba{
z=\argmin\nolimits_{i\in\{1,\ldots,{C}\}} \tau_i~,\text{where }\tau_i \sim \mbox{Exp}(e^{\phi_i}),
\label{eq:BerRepara}
}
then $P(z \given \phiv) = e^{\phi_z}/\sum_{i=1}^{C} e^{\phi_i}$, and hence \eqref{eq:BerRepara} is an augmented representation of the categorical distribution $z\sim\text{Cat}(\sigma(\phiv))$; one may consider $\tau_i\sim \mbox{Exp}(e^{\phi_i})$ as augmented latent variables, the marginalization of which from $z=\argmin\nolimits_{i\in\{1,\ldots,{C}\}} \tau_i$ leads to $P(z \given \phiv)$. 
Consequently, the expectation with respect to the categorical variable of $C$ categories can be rewritten as one with respect to 
$C$ augmented exponential random variables as
\ba{
\mathcal{E}(\phiv)=\E_{z\sim\text{Cat}(\sigma(\phiv))}[f{}(z)] =
\E_{\tau_1\sim \text{Exp}(e^{\phi_1}), \ldots,\tau_{C}\sim \text{Exp}(e^{\phi_{C}})}[f{}(\argmin\nolimits_{i} \tau_i)].
\label{eq:E_z_repara}
}
Since the exponential random variable $\tau\sim\mbox{Exp}(e^{\phi})$ can be reparameterized as $\tau= \epsilon e^{-\phi},~\epsilon\sim\mbox{Exp}(1) $, we also have
\ba{
\mathcal{E}(\phiv)= \E_{ \epsilon_1,\ldots, \epsilon_{C}\,\stackrel {iid}\sim\, \text{Exp}(1)}[f{}(\argmin\nolimits_{i} \epsilon_i e^{-\phi_i} )]. 
\label{eq:E_z_repara1}
}
Note as the $\argmin$ operator
 is non-differentiable, the widely used reparameterization trick \citep{kingma2013auto,rezende2014stochastic} 
 is not applicable to computing the gradient of $\mathcal{E}(\phiv) $ via the reparameterized representation in 
\eqref{eq:E_z_repara1}.

\subsection{REINFORCE Estimator in the Augmented Space}
Using REINFORCE \citep{williams1992simple} on \eqref{eq:E_z_repara}, we have 
 $\nabla_{\phiv}\mathcal{E}(\phiv) = [\nabla_{\phi_1} \mathcal{E}(\phiv),\ldots,\nabla_{\phi_{C}} \mathcal{E}(\phiv)]'$, where
\ba{
\nabla_{\phi_{c}} \mathcal{E}(\phiv)&= \textstyle 
\E_{\tau_1\sim \text{Exp}(e^{\phi_1}), \ldots,\tau_{C}\sim \text{Exp}(e^{\phi_{C}})}\Big[f{}(\argmin\nolimits_{i} \tau_i) \nabla_{\phi_{c}} \log \prod_{i=1}^{C} \mbox{Exp}(\tau_i;e^{\phi_i}) \Big]\notag\\
 &=\E_{\tau_1\sim \text{Exp}(e^{\phi_1}), \ldots,\tau_{C}\sim \text{Exp}(e^{\phi_{C}})}[f{}(\argmin\nolimits_{i} \tau_i) \nabla_{\phi_{c}} \log \mbox{Exp}(\tau_{c};e^{\phi_{c}}) ]\notag\\
& = \E_{\tau_1\sim \text{Exp}(e^{\phi_1}), \ldots,\tau_{C}\sim \text{Exp}(e^{\phi_{C}})}[f{}(\argmin\nolimits_{i}\tau_i) (1-\tau_{c}e^{\phi_{c}})]. 
 \label{eq:Grad_1}
}

Below we show how to merge $\nabla_{\phi_{c}} \mathcal{E}(\phiv) $ and $- \nabla_{\phi_j} \mathcal{E}(\phiv) $ by first re-expressing \eqref{eq:Grad_1} into an expectation with respect to $iid$ exponential random variables, swapping the indices of these random variables, and then sharing common random numbers \citep{mcbook} 
to well control the variance of Monte Carlo integration.

\subsection{Merge of Augment-REINFORCE Gradients}\label{sec:merge}

A key observation of the paper is we can 
 re-express the expectation 
 in 
 \eqref{eq:Grad_1} as
\ba{
\nabla_{\phi_{c}} \mathcal{E}(\phiv)
& = \E_{ \epsilon_1,\ldots, \epsilon_{C}\,\stackrel {iid}\sim\, \text{Exp}(1)}[f{}(\argmin\nolimits_{i} \epsilon_i e^{-\phi_i} ) (1-\epsilon_{c})] }
Furthermore, we note that $\mbox{Exp}(1) \stackrel{d} = \mbox{Gamma}(1,1)$, letting $\epsilon_1, \ldots,\epsilon_{C}\stackrel{iid}\sim \mbox{Exp}(1)$ is the same (e.g., 
as proved in Lemma IV.3 of \citet{zhou2012negative}) in distribution as
 letting $$ \epsilon_i = \pi_i \epsilon,~~~\text{for $i=1,\ldots,{C},$ ~~~where } \piv\sim\mbox{Dirichlet }(\mathbf{1}_{{C}}),~\epsilon\sim \mbox{Gamma}({C},1),$$
and $\argmin\nolimits_{i}\pi_i e^{-\phi_i}=\argmin\nolimits_{i} \epsilon \pi_i e^{-\phi_i}$. Thus using Rao-Blackwellization \citep{casella1996rao}, we can re-express the gradient in 
\eqref{eq:Grad_1} as
\ba{
\nabla_{\phi_{c}} \mathcal{E}(\phiv)&= \E_{ \epsilon\sim \text{Gamma}({C},1), ~\piv\sim\text{Dirichlet}(\mathbf{1}_{C})}[f{}(\argmin\nolimits_{i} \epsilon \pi_i e^{-\phi_i} ) (1-\epsilon \pi_{c})]\notag\\
&= \E_{ \piv\sim\text{Dirichlet}(\mathbf{1}_{C})}[f{}(\argmin\nolimits_{i} \pi_i e^{-\phi_i} )(1-{C} \pi_{c})]. \notag\\
& = \E_{\piv\sim\text{Dirichlet}(\mathbf{1}_{C})}[f{}(\argmin\nolimits_{i} \pi^{_{{c} \leftrightharpoons j}}_i e^{-\phi_i} ) (1-{C} \pi_j)],
\label{eq:Grad_2}
}
where $j\in\{1,\ldots,{C}\}$ is an arbitrarily selected reference category, whose selection does not depends on $\piv$ and $\phiv$.

Another useful observation of the paper is that the function $$b(\piv,\phiv,j) = \frac{1}{{C}}\sum_{m=1}^{C} f{}(\argmin\nolimits_{i} \pi^{_{m \leftrightharpoons j}}_i e^{-\phi_i} ) (1-{C} \pi_j)$$ has zero expectation, as
\ba{
\E_{\piv\sim\text{Dirichlet}(\mathbf{1}_{C})}[b(\piv,\phiv,j)] &= \E_{\piv\sim\text{Dirichlet}(\mathbf{1}_{C})}\left[f{}(\argmin\nolimits_{i} \pi_i e^{-\phi_i} ) \sum_{m=1}^{C}\left(\frac{1}{{C}}- \pi_m\right)\right]
 = 0.
}
Using $\E[b(\piv,\phiv,j)]$ as the baseline function and subtracting it from \eqref{eq:Grad_2} leads to \eqref{eq:ARM-cat}. 
We now conclude the proof of Theorem~\ref{th:AR} for the AR estimator, and Equation \ref{eq:ARM-cat} for the ARS estimator. Once the ARS estimator is proved, Theorem \ref{theorem 3} for the ARSM estimator directly follows. 

\begin{proof}[Proof of Corollary \ref{cor:ARM}]
Note that letting $(u,1-u)\sim \mbox{Dir}(1,1) $ is the same as letting $u\sim \mbox{Uniform}(0,1)$.
Thus regardless of whether we choose Category 1 or Category 2 for as the reference category, we have 
\ba{
\nabla_{\phiv_1} \mathcal{E}(\phiv) = \E_{u\sim\text{Uniform}(0,1)}[f(\argmin(u,\sigma(\phi_1-\phi_2))-f(\argmin(1-u,\sigma(\phi_1-\phi_2))](1/2-u)
}
and $\nabla_{\phiv_2} \mathcal{E}(\phiv)=-\nabla_{\phiv_1} \mathcal{E}(\phiv)$.
Denote $\phi=\phi_1-\phi_2$ and $\eta=\phi_1+\phi_2$, we have
$$
\nabla_{\phi} \mathcal{E}(\phiv) = \nabla_{\phi_1} \mathcal{E}(\phiv)\frac{ \partial{\phi_1}}{\partial{\phi}} +\nabla_{\phi_2} \mathcal{E}(\phiv)\frac{ \partial{\phi_2}}{\partial{\phi}} =\nabla_{\phiv_1} \mathcal{E}(\phiv).
$$

\end{proof}

\section{Fast Computation for the Swap Step} 
\label{sec:fast compute}
Computing the pseudo actions $z^{_{{c} \leftrightharpoons j}} =\argmin\nolimits_{i} \pi_{i}^{_{{c} \leftrightharpoons j}}e^{-\phi_{i}}$ due to the swap operations can be efficiently realized: we first compute $o_{ij}=\ln \pi_{i} - \phi_{j}$, $z=\argmin_{i} (\ln \pi_{i} - \phi_i)$, and $o_{\min}=\ln \pi_{z} - \phi_z$; then for $m=1\ldots,C,~j<m$, compute
\ba{z^{_{{m} \leftrightharpoons j}}=
\begin{cases}
m, \text{ if } z\notin\{m,j\},~ \min\{o_{mj},o_{jm}\} < o_{\min},~o_{mj}\le o_{jm};\\
j, \text{ if } z\notin\{m,j\},~ \min\{o_{mj},o_{jm}\} < o_{\min},~o_{mj}>o_{jm};\\
\argmin\nolimits_{i} (\ln\pi_{i}^{_{m\leftrightharpoons j}} {-\phi_{i}}), \text{ if } z\in\{m,j\} 
;\\
z, \text{ otherwise};\\
\end{cases}\notag
}
and let $z^{_{{j} \leftrightharpoons j}} =z$ for all $j$, and $z^{_{{m} \leftrightharpoons j}} = z^{_{{j} \leftrightharpoons m}} $ for all $j>m$.

\section{ARSM for Multivariate, Hierarchical, and Sequential Categorical Variables}
\label{sec:derivation1}
\subsection{ARSM for Multivariate Categorical Variables}

\begin{proposition}[AR, ARS, and ARSM for multivariate categorical\label{theorem 1b}] 
Denote $\zv=(z_1,\ldots, z_K)$, where $z_k\in\{1,\ldots,{C}\}$, as a $K$ dimensional vector of $C$-way categorical variables. Denote $\Pimat = (\piv_1,\ldots,\piv_K) \in\mathbb{R}^{C\times {K}}$ as a matrix obtained by concatenating $K$ column vectors $\piv_k=(\pi_{k1},\ldots,\pi_{k{C}})'$, 
and $\Phimat = (\phiv_1,\ldots,\phiv_K)\in\mathbb{R}^{C\times {K}}$ by concatenating 
 $\phiv_k=(\phi_{k1},\ldots,\phi_{k{C}})'$. With the multivariate AR estimator, 
 the gradient of 
\ba{\mathcal{E}(\Phimat)=\E_{\zv\sim\prod_{k=1}^K\emph{\text{Cat}}(z_k;\sigma(\phiv_{k}))}[f{}(\zv)] \label{eq:PHI}
} with respect to 
$\phi_{k{c}} $ 
 is expressed as 
 \ba{
 \nabla_{\phi_{k{c}}}\mathcal{E}(\Phimat)& = \E_{ \Pimat\sim
 \prod_{{k}=1}^{{K}}\emph{\text{Dir}}(\piv_k;\mathbf{1}_{C})}[ 
 f(\zv)(1-{C} \pi_{kc})],\notag\\
 z_k :&\textstyle= \argmin_{i\in\{1,\ldots,C\}} \pi_{ki} e^{-\phi_{ki}}. 
 }
 Denoting $\jv=(j_1,\ldots,j_K)$, where $j_k\in\{1,\ldots,C\}$ is a randomly selected reference category for dimension $k$, 
 the multivariate ARS estimator is expressed as
 \begin{equation}
\!\begin{aligned}
\nabla_{\phi_{k{c}}}\mathcal{E}(\Phimat)& = 
 \E_{ \Pimat\sim
 \prod_{{k}=1}^{{K}}\emph{\text{Dir}}(\piv_k;\mathbf{1}_{C})}[ 
 f_{\Delta }^{_{{c} \leftrightharpoons \jv}}( \Pimat) (1-{C} \pi_{kj_k})], 
\\
\textstyle f_{\Delta }^{_{{c} \leftrightharpoons \jv}}( \Pimat) :
 &\textstyle = f({\zv}^{_{{c} \leftrightharpoons \jv}} ) - \frac{1}{C}\sum_{m=1}^{C} f({\zv}^{_{{m} \leftrightharpoons \jv}} ),\\
 {\zv}^{_{{c} \leftrightharpoons \jv}} :& =( z^{_{{c} \leftrightharpoons j_1}} _{1},z^{_{{c} \leftrightharpoons j_2}} _2,\ldots, z^{_{{c} \leftrightharpoons j_K}} _{K}), \\
 z^{_{{c} \leftrightharpoons j_k}} _{k}:&\textstyle=\argmin_{i\in\{1,\ldots,C\}} \pi^{_{c \leftrightharpoons j_k}}_{ki} e^{-\phi_{ki}}. \end{aligned} \!\!\!\!\!\!\label{eq:ARM-catvector}
\end{equation}
Setting $\jv=j\mathbf{1}_K$ and averaging over all $j\in\{1,\ldots,C\}$, the multivariate ARSM estimator is expressed as
\ba{
\textstyle \nabla_{\phi_{k{c}}}\mathcal{E}(\Phimat) &= 
 \E_{ \Pimat\sim
 \prod_{{k}=1}^{{K}}{\emph{\text{Dir}}}(\piv_k;\mathbf{1}_{C})}
 \textstyle 
 \big[ \sum_{j=1}^C 
 f_{\Delta }^{_{{c} \leftrightharpoons (j\mathbf{1}_K)}} ( \Pimat)(\frac{1}{C}-\pi_{kj})\big]. 
 \label{eq:ARM-catvector_1}
}

 \end{proposition}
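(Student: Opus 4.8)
The plan is to reduce all three multivariate identities to the univariate estimators already proved in Theorem~\ref{th:AR} and Theorem~\ref{theorem 3}, exploiting the fact that $q(\zv)=\prod_{k=1}^K\text{Cat}(z_k;\sigma(\phiv_k))$ makes the coordinates independent, so that $\phi_{kc}$ enters only through the $k$th factor. Throughout I would augment every coordinate by Property~I, representing $z_{k'}=\argmin_{i}\pi_{k'i}e^{-\phi_{k'i}}$ with independent columns, so that $\Pimat\sim\prod_{k'=1}^K\text{Dir}(\piv_{k'};\mathbf{1}_{C})$ is the natural source of common random numbers.

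For the AR estimator I would condition on $\zv_{-k}$, the vector of all coordinates other than $k$. Since $z_k$ is independent of $\zv_{-k}$ given $\Phimat$, writing $\tilde f(z_k):=f(z_1,\ldots,z_K)$ as a univariate reward in $z_k$ (with $\zv_{-k}$ frozen) gives $\nabla_{\phi_{kc}}\mathcal{E}(\Phimat)=\E_{\zv_{-k}}[\nabla_{\phi_{kc}}\E_{z_k}[\tilde f(z_k)]]$, and the univariate identity \eqref{eq:Grad_AR} turns the inner gradient into $\E_{\piv_k}[\tilde f(z_k)(1-C\pi_{kc})]$. Re-expressing the outer expectation over $\zv_{-k}$ through the augmented columns $\{\piv_{k'}\}_{k'\neq k}$ recombines everything into a single draw $\Pimat\sim\prod_{k'}\text{Dir}(\mathbf{1}_{C})$ and yields the stated multivariate AR formula.

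The ARS estimator is the main obstacle, and I would handle it in two moves. First, using Property~IV independently in each column together with column independence, the joint law of $\Pimat$ is invariant under the map $\Pimat\mapsto\Pimat^{_{c\leftrightharpoons\jv}}$ that swaps entries $c$ and $j_{k'}$ inside each column $k'$. Relabeling the multivariate AR expression under this map gives $\nabla_{\phi_{kc}}\mathcal{E}(\Phimat)=\E_{\Pimat}[f(\zv^{_{c\leftrightharpoons\jv}})(1-C\pi_{kj_k})]$, where I would use the swap identity $\pi_{kc}^{_{c\leftrightharpoons j_k}}=\pi_{kj_k}$ and the fact that each coordinate's $\argmin$ becomes $z_{k'}^{_{c\leftrightharpoons j_{k'}}}$. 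Second, I would show that the baseline $\frac1C\sum_{m=1}^C f(\zv^{_{m\leftrightharpoons\jv}})(1-C\pi_{kj_k})$ has zero expectation: for each fixed $m$ I change variables via the distribution-preserving swap $\tilde\Pimat=\Pimat^{_{m\leftrightharpoons\jv}}$, under which $f(\zv^{_{m\leftrightharpoons\jv}})\mapsto f(\zv(\tilde\Pimat))$ and $\pi_{kj_k}\mapsto\tilde\pi_{km}$, so that summing over $m$ collapses the common reward to $\E_{\tilde\Pimat}[f(\zv)\sum_{m=1}^C(\tfrac1C-\tilde\pi_{km})]=0$ because column $k$ of $\tilde\Pimat$ sums to one. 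Subtracting this zero-mean baseline from the swapped AR expression produces \eqref{eq:ARM-catvector}. The delicate point to watch is that the swap perturbs all $K$ coordinates of $\zv^{_{m\leftrightharpoons\jv}}$ while the weight $(1-C\pi_{kj_k})$ touches only column $k$, so the change of variables must correctly track the single factor $\pi_{kj_k}\mapsto\tilde\pi_{km}$ while $f$ absorbs the simultaneous relabeling of every coordinate.

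Finally, the ARSM estimator follows by averaging. Fixing the common reference $\jv=j\mathbf{1}_K$ makes \eqref{eq:ARM-catvector} an unbiased estimator for each $j\in\{1,\ldots,C\}$, so by linearity of expectation the arithmetic mean $\frac1C\sum_{j=1}^C$ of these $C$ unbiased estimators is again unbiased; collecting terms converts the common factor $\frac1C(1-C\pi_{kj})$ into $(\tfrac1C-\pi_{kj})$ and gives \eqref{eq:ARM-catvector_1}.
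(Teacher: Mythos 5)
Your proposal is correct, but it is organized differently from the paper's own proof, and the difference is worth noting. The paper conditions on $\zv_{\backslash k}$ and applies the full univariate ARS/ARSM result (Theorem~\ref{theorem 3}) to the inner univariate gradient, so the swap and the baseline subtraction are inherited from the univariate theorem; this yields an intermediate, already-unbiased estimator in which only coordinate $k$ carries pseudo actions, at the cost of up to $KC$ evaluations of $f$. The remaining technical work in the paper is then to exchange the order of the two expectations and to show, via exponential augmentation and column-wise swap invariance (Property~IV applied to each $\piv_{k'}$, $k'\neq k$), that $\E_{\zv_{\backslash k}}[f(\zv_{\backslash k},\zv_k^{_{c\leftrightharpoons j}})]=\E_{\Pimat_{\backslash k}}[f(\zv_1^{_{c\leftrightharpoons j}},\ldots,\zv_K^{_{c\leftrightharpoons j}})]$, i.e.\ the other coordinates may be swapped ``for free,'' which is what permits a single shared draw of $\Pimat$ and at most $C$ evaluations of $f$. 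You instead lift the entire univariate AR $\rightarrow$ swap $\rightarrow$ baseline-subtract derivation to the product space: you first establish the multivariate AR estimator, then apply the joint per-column swap map $\Pimat\mapsto\Pimat^{_{c\leftrightharpoons\jv}}$ (whose law-invariance follows from Property~IV plus independence of columns), and finally verify the zero mean of the joint baseline by a per-term change of variables $\tilde\Pimat=\Pimat^{_{m\leftrightharpoons\jv}}$ that collapses the sum to $\E\bigl[f(\zv)\sum_{m=1}^{C}(\tfrac1C-\pi_{km})\bigr]=0$; all of these steps are sound, including the delicate bookkeeping you flag (the weight tracks only $\pi_{kj_k}\mapsto\tilde\pi_{km}$ while $f$ absorbs the relabeling of every coordinate). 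What your route buys is a single-pass derivation that lands directly on the common-random-number form without ever writing down the expensive intermediate estimator; what the paper's route buys is reuse of the univariate theorem as a black box (no need to re-derive the baseline property) and an explicit exhibition of the naive per-coordinate estimator, which motivates the exchange-of-expectations step as a computational improvement from $KC$ down to $C$ function evaluations.
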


Note to obtain $\nabla_{\phi_{k{c}}}\mathcal{E}(\Phimat)$ for all $k$ and $c$ based on the ARS estimator in \eqref{eq:ARM-catvector}, we only need to evaluate $f({\zv}^{_{{1} \leftrightharpoons \jv}} ),\ldots, f({\zv}^{_{{C} \leftrightharpoons \jv}} )$. Thus regardless of how large $K$ is, to obtain a single Monte Carlo sample estimate of the true gradient, one needs to evaluate the reward function $f(\cdotv)$ as few as zero time, which happens when the number of unique vectors in $\{{\zv}^{_{{c} \leftrightharpoons \jv}} \}_{c=1,C}$ is one, and as many as $C$ times, which happens when all ${\zv}^{_{{c} \leftrightharpoons \jv}} $ are different from each other. Similarly, if the ARSM estimator in \eqref{eq:ARM-catvector_1} is used, the number of times one needs to evaluate $f(\cdotv)$ is between zero and $C(C-1)/2+1$. In the multivariate setting where $\zv\in\{1,\ldots,C\}^K$, we often choose a relatively small $C$, such as $C=10$, but allows $K$ to be as large as necessary, such as $K=100$. Thus even $C^K$, the number of unique $\zv$'s, could be enormous when $K$ is large, both the ARS and ARSM estimators remain computationally efficient; this differs them from estimators, such as the one in \citet{titsias2015local}, that are not scalable in the dimension $K$.

\subsection{ARSM for Categorical Stochastic Networks}
Let us construct a $T$-categorical-stochastic-layer network as \ba{
 &\textstyle q_{ \Phimat_{1:T}}(\zv_{1:T}\given \xv) = 
\prod_{t=1}^{T}q(\zv_{t} \given \Phimat_{t}),~\Phimat_t : = \mathcal{T}_{\wv_t}(\zv_{1:t-1}), \notag\\
&~~~~~~~~~~\small \textstyle q(\zv_{t} \given \Phimat_t) :=\prod_{k=1}^{K_t} \mbox{Cat}(z_{tk}; \sigma(\phiv_{tk})), \label{eq:ARM_T}
}
where $\zv_0:=\xv$, $\zv_t:=(z_{t1},\ldots,z_{tK_t})'\in\{1,\ldots,C\}^{K_t}$ is a $K_t$-dimensional $C$-way categorical vector at layer $t$, $\phiv_{tk}:=(\phi_{tk1},\ldots,\phi_{tk{C}})' \in \mathbb{R}^{C}$ is the 
parameter vector for dimension $k$ at layer $t$, $\Phimat_t := \big(\phiv_{t1},\ldots,\phiv_{tK_t}\big)\in\mathbb{R}^{C\times K_t}$, and $\mathcal{T}_{\wv_t}(\cdotv)$ represents a function parameterized by $\wv_t$ that deterministically transforms $\zv_{t-1}$ to $\Phimat_t$. In this paper, we will define $\mathcal{T}_{\wv_t}(\cdotv)$ with a neural network. 
 
\begin{proposition}
For the categorical stochastic network defined in \eqref{eq:ARM_T}, the ARSM gradient of the objective
\baa{
&\mathcal{E}( \Phimat_{1:T}) 
 = \E_{\zv_{1:T}\sim q_{ \Phimat_{1:T} }(\zv_{1:T}\given \xv)}\left[f(\zv_{1:T})\right]
}
with respect to $\wv_t$ 
can be expressed as $ \nabla_{\wv_t}\mathcal{E}(\Phimat_{1:T}) = \nabla_{\wv_t} \big(\sum_{k=1}^{K_t}\sum_{c=1}^C(\nabla_{\phi_{tk{c}}}\mathcal{E}(\Phimat_{1:T}) ) \phi_{tkc}\big)$, where
\ba{
\textstyle \nabla_{\phi_{tk{c}}}\mathcal{E}(\Phimat_{1:T}) &
 = \E_{ \Pimat_t\sim
 \prod_{{k}=1}^{{K_t}}{\emph{\text{Dir}}}(\piv_{tk};\mathbf{1}_{C})}
 \textstyle 
 \big[ \sum_{j=1}^C 
 f_{t\Delta}^{_{{c} \leftrightharpoons j}}( \Pimat_t)\big(\frac{1}{C}-\pi_{tkj}\big)\big],
}
where $\piv_{tk}=(\pi_{tk1},\ldots,\pi_{tkC})'$ is the Dirichlet distributed probability vector for dimension $k$ at layer $t$ and
\bas{
\textstyle f_{t\Delta}^{_{{c} \leftrightharpoons j}}( \Pimat_t):
 &\textstyle = f(Z_t^{_{{c} \leftrightharpoons j}} ) - \frac{1}{C}\sum_{m=1}^{C} f(Z_t^{_{{m} \leftrightharpoons j}} ),\\
Z_t^{_{{c} \leftrightharpoons j}}: & =\{{\zv}_{1:t-1}, \zv_{t:T}^{_{{c} \leftrightharpoons j}} \},~~
 {\zv}_{1:t-1} \sim q_{\Phimat_{1:t-1}}(\zv_{1:t-1} \given \xv) ,\\
 {\zv}_{t}^{_{{c} \leftrightharpoons j}}:& =( z_{t1}^{_{{c} \leftrightharpoons j}},\ldots, z_{tK_t}^{_{{c} \leftrightharpoons j}})', \\
z_{tk}^{_{{c} \leftrightharpoons j}}:&\textstyle=\argmin_{i\in\{1,\ldots,C\}} \pi_{tki}^{_{c \leftrightharpoons j}}e^{-\phi_{tki}}, \\
 \zv_{t+1:T}^{_{{c} \leftrightharpoons j}} &\sim q_{\Phimat_{t+1:T}}(\zv_{t+1:T} \given {\zv}_{1:t-1}, \zv_{t}^{_{{c} \leftrightharpoons j}}).
 }
 \label{col: TKC}
\end{proposition}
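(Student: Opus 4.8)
The plan is to reduce the multi-layer problem to the already-established multivariate ARSM estimator of Proposition~\ref{theorem 1b} by conditioning on the upstream layers and introducing a cost-to-go function for the downstream layers. First I would note that $\wv_t$ enters $\mathcal{E}(\Phimat_{1:T})$ only through $\Phimat_t=\mathcal{T}_{\wv_t}(\zv_{1:t-1})$: the vectors $\zv_{1:t-1}$ are produced by layers $1,\ldots,t-1$ and carry no dependence on $\wv_t$, while every downstream influence of $\wv_t$ is mediated by the distribution of $\zv_t$, which is parameterized precisely by the entries of $\Phimat_t$. Hence by the chain rule $\nabla_{\wv_t}\mathcal{E}=\sum_{k=1}^{K_t}\sum_{c=1}^C(\nabla_{\phi_{tkc}}\mathcal{E})\,\nabla_{\wv_t}\phi_{tkc}$, which is exactly the detached ``surrogate objective'' identity $\nabla_{\wv_t}\mathcal{E}=\nabla_{\wv_t}\big(\sum_{k,c}(\nabla_{\phi_{tkc}}\mathcal{E})\,\phi_{tkc}\big)$ once the signals $\nabla_{\phi_{tkc}}\mathcal{E}$ are held fixed. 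This isolates the remaining task: deriving the stated expression for $\nabla_{\phi_{tkc}}\mathcal{E}(\Phimat_{1:T})$.

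For that, I would fix a realization of $\zv_{1:t-1}$ (which fixes $\Phimat_t$) and define the cost-to-go $h(\zv_t):=\E_{\zv_{t+1:T}\sim q_{\Phimat_{t+1:T}}(\cdot\given\zv_{1:t})}[f(\zv_{1:T})]$. The crucial observation is that $h$ is a deterministic function of $\zv_t$ that does \emph{not} depend on the layer-$t$ parameters $\phi_{tkc}$: the downstream law is governed by the realized actions $\zv_{1:t}$ and the weights $\wv_{t+1:T}$, not by $\Phimat_t$ directly. The conditional objective is then $\mathcal{E}_{\zv_{1:t-1}}(\Phimat_t)=\E_{\zv_t\sim\prod_k\text{Cat}(z_{tk};\sigma(\phiv_{tk}))}[h(\zv_t)]$, which has exactly the form of the multivariate objective \eqref{eq:PHI} with reward $h$ and dimension $K=K_t$. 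Applying the multivariate ARSM estimator \eqref{eq:ARM-catvector_1} with reward $h$ gives $\nabla_{\phi_{tkc}}\mathcal{E}_{\zv_{1:t-1}}=\E_{\Pimat_t}[\sum_{j} h_\Delta^{_{c\leftrightharpoons j}}(\Pimat_t)(\tfrac1C-\pi_{tkj})]$, where $h_\Delta^{_{c\leftrightharpoons j}}:=h(\zv_t^{_{c\leftrightharpoons j}})-\tfrac1C\sum_m h(\zv_t^{_{m\leftrightharpoons j}})$ and the pseudo actions $\zv_t^{_{c\leftrightharpoons j}}$ are formed exactly as in the statement.

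It then remains to replace the intractable $h$ by a one-sample forward rollout and to average over $\zv_{1:t-1}$. Because the ARSM estimator is \emph{linear} in the reward, I can substitute, for each pseudo action $\zv_t^{_{c\leftrightharpoons j}}$, an independent forward sample $\zv_{t+1:T}^{_{c\leftrightharpoons j}}\sim q_{\Phimat_{t+1:T}}(\cdot\given\zv_{1:t-1},\zv_t^{_{c\leftrightharpoons j}})$ and use the unbiased single-sample estimate $f(Z_t^{_{c\leftrightharpoons j}})$ with $Z_t^{_{c\leftrightharpoons j}}=\{\zv_{1:t-1},\zv_t^{_{c\leftrightharpoons j}},\zv_{t+1:T}^{_{c\leftrightharpoons j}}\}$; taking the conditional expectation of each rollout reproduces $h(\zv_t^{_{c\leftrightharpoons j}})$, so $h_\Delta^{_{c\leftrightharpoons j}}$ becomes $f_{t\Delta}^{_{c\leftrightharpoons j}}(\Pimat_t)$ and unbiasedness is preserved. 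Finally, taking the outer expectation over $\zv_{1:t-1}\sim q_{\Phimat_{1:t-1}}(\cdot\given\xv)$, which I fold into the definition of $Z_t^{_{c\leftrightharpoons j}}$, yields the claimed form of $\nabla_{\phi_{tkc}}\mathcal{E}(\Phimat_{1:T})$, completing the derivation.

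The main obstacle I anticipate is precisely the rollout-substitution step: one must verify that drawing a \emph{separate} downstream trajectory for each distinct pseudo action is legitimate, i.e.\ that the conditional expectation over these rollouts (given the upstream actions and the Dirichlet noise $\Pimat_t$) reconstructs $h$ exactly, and that interchanging this expectation with the finite ARSM sum introduces no bias. Linearity of the estimator in the reward together with the conditional independence of each pseudo action's rollout are what make this work, but it is the one place where the hierarchical dependence of $\Phimat_{t+1:T}$ on the layer-$t$ action must be tracked carefully, since each pseudo action $\zv_t^{_{c\leftrightharpoons j}}$ induces its own downstream distribution.
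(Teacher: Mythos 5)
Your proposal is correct and follows essentially the same route as the paper's proof: condition on $\zv_{1:t-1}$, define the cost-to-go reward (the paper's $f_t(\zv_{1:t})=\E_{q(\zv_{t+1:T}\given\zv_t)}[f(\zv_{1:T})]$, your $h$), apply the multivariate ARSM estimator of Proposition~\ref{theorem 1b} to the conditional objective, and then replace the intractable cost-to-go with a single downstream rollout per pseudo action, averaging over the upstream sample. Your explicit justification of the rollout-substitution step (linearity of the estimator in the reward plus conditional unbiasedness of each per-pseudo-action rollout) is a point the paper asserts without elaboration, so your write-up is if anything slightly more careful on that step.
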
\vspace{-6mm}

\subsection{Proofs}
\label{sec:ARMvector}
Below we show how to generalize Theorem \ref{theorem 3} for a univariate categorical variable to Proposition \ref{theorem 1b} for multivariate categorical variables, and Proposition \ref{col: TKC} for hierarchical multivariate categorical variables. 

\begin{proof}[Proof of Proposition \ref{theorem 1b}]
For the expectation in \eqref{eq:PHI}, since $z_{k}$ are conditionally independent given $\phiv_{k}$, we have
\ba{
\nabla_{ \phi_{kc}}\mathcal{E}(\Phimat)&=\E_{\zv_{\backslash k}\sim\prod_{k'\neq k}{\text{Discrete}}(z_{k'};\sigma(\phiv_{k'}))} \big[ \nabla_{ \phi_{kc}} \E_{z_{k}\sim {\text{Cat}}(\sigma(\phiv_{k}))} [f{}(\zv)]\big ]. \label{eq:ARM_grad_M{C}}
}
Using Theorem \ref{theorem 3} to compute the gradient in the above equation directly leads to
\ba{
\small
&\small\nabla_{\phi_{kc}}\mathcal{E}(\Phimat)=\E_{\zv_{\backslash k}\sim\prod_{k'\neq k}{\text{Discrete}}(z_{k'};\sigma(\phiv_{k'}))} \Big\{ \E_{ \piv_{k}\sim{\text{Dirichlet}}(\mathbf{1}_{C})}\Big[(f(\zv_{\backslash k},\zv_k^{c \leftrightharpoons j} )-\frac{1}{C} \sum_{m=1}^C f(\zv_{\backslash k},\zv_k^{m \leftrightharpoons j} ))(1-C \pi_{kj})\Big]\Big\},\label{eq:PHI_mv_0}
} 
The term inside $[\cdotv]$ of \eqref{eq:PHI_mv_0} can already be used to estimate the gradient, however, in the worst case scenario that all the elements of $\{\zv_k^{c \leftrightharpoons j}\}_{j=1,C}$ are different, it needs to evaluate the function $f(\zv_{\backslash k},\zv_k^{c \leftrightharpoons j})$ for $j=1,\ldots,C$, and hence $C$ times for each ${k}$ 
and $KC$ times in total. To reduce computation and simplify implementation, exchanging the order of the two expectations in \eqref{eq:PHI_mv_0}, we have 
\ba{ \textstyle
&\nabla_{ \phi_{kc}}\mathcal{E}(\Phimat)=\E_{ \piv_{k}\sim{\text{Dirichlet}}(\mathbf{1}_{C})}\left\{(1-C \pi_{kj})
\E_{\zv_{\backslash k}\sim\prod_{k'\neq k}{\text{Discrete}}(z_{k'};\sigma(\phiv_{k'}))}\left[f(\zv_{\backslash k},\zv_k^{c \leftrightharpoons j})-\frac{1}{C} \sum_{m=1}^C f(\zv_{\backslash k}, \zv_k^{m \leftrightharpoons j} )\right]\right\} \label{eq:PHI_mv}
}
\vspace{-2mm}
Note that 
\bas{
&~~~\E_{\zv_{\backslash k}\sim\prod_{k'\neq k}{\text{Discrete}}(z_{k'};\sigma(\phiv_{k'}))} [f(\zv_{\backslash k}, \zv_k^{c \leftrightharpoons j})] \\
&\textstyle=\E_{\epsilonv_{\backslash k}\sim\prod_{k'\neq k}\prod_{i=1}^{C}{\text{Exp}}(\epsilon_{k'i};e^{\phi_{k'i}}) }
\big[f\big( (z_{k'}=\argmin_{i\in\{1,\ldots,C\}} \epsilon_{k'i}e^{-\phi_{k'i}})_{k'\neq k} ,~\zv_k^{c \leftrightharpoons j} \big)\big]\\
&\textstyle=\E_{\epsilonv_{\backslash k}\sim\prod_{k'\neq k}\prod_{i=1}^{C}{\text{Exp}}(\epsilon_{k'i };e^{\phi_{k'i}}) }
\big[f\big( (z_{k'}=\argmin_{i\in\{1,\ldots,C\}} \epsilon_{k'i}^{_{(c \leftrightharpoons j)} }e^{-\phi_{k'i}})_{k'\neq k} ,~\zv_k^{c \leftrightharpoons j}\big)\big]\\
&\textstyle=\E_{\Pimat_{\backslash k}\sim\prod_{k'\neq k}{\text{Dirichlet}}(\piv_{k'};\mathbf{1}_{C})} 
\big[f\big( (z_{k'}=\argmin_{i\in\{1,\ldots,C\}} \pi_{k'i}^{_{(c \leftrightharpoons j)} }e^{-\phi_{k'i}})_{k'\neq k} ,~\zv_k^{c \leftrightharpoons j}\big)\big]\\
&\textstyle=\E_{\Pimat_{\backslash k}\sim\prod_{k'\neq k}{\text{Dirichlet}}(\piv_{k'};\mathbf{1}_{C})} 
\big[f\big(\zv_1^{c \leftrightharpoons j},\ldots,\zv_K^{c \leftrightharpoons j}\big)\big]
}
Plugging the above equation into \eqref{eq:PHI_mv} leads to a simplified representation as \eqref{eq:ARM-catvector_1} shown in Proposition \ref{theorem 1b}, with which, regardless of the dimensions ${C}$, we draw $\Pimat=\{\piv_1,\ldots,\piv_{K}\}$ once to produce correlated $\zv^{c \leftrightharpoons j}$'s, 
and evaluate the function $f(\cdotv)$ at most $ C$ times.
\end{proof}

\begin{proof}[Proof of Proposition \ref{col: TKC}]
For multi-layer stochastic network
$
q_{ \Phimat_{1:T}}(\zv_{1:T}\given \xv) = q_{ \Phimat_{1}}(\zv_1\given \xv)\Big[\prod\nolimits_{t=1}^{T-1}q_{\Phimat_{t+1}}(\zv_{t+1} \given \zv_{t})\Big],
$
the gradient of the $t$-th layer parameter $ \Phimat_{t}$ is 
\bas{
\nabla_{\Phimat_{t}} \mathcal{E}( \Phimat_{1:T}) = \E_{\zv_{1:t-1} \sim q(\zv_{1:t-1} | \xv)} \nabla_{\Phimat_{t}} \E_{q(\zv_t | \zv_{t-1})} f_t(\zv_{1:t})
}
where $f_t(\zv_{1:t}) = \bE_{q(\zv_{t+1:T}|\zv_t)}[f(\zv_{1:T})]$. To compute the ARSM gradient estimator, first draw a single sample $\zv_{1:t-1} \sim q(\zv_{1:t-1} \given \xv)$ if $t>1$ and compute the pseudo action vector for the $t$-th layer according to Proposition \ref{theorem 1b} as
\bas{
z_{tk}^{c \leftrightharpoons j}:&\textstyle=\argmin_{i\in\{1,\ldots,C\}} \pi^{_{c \leftrightharpoons j}}_{tki} e^{-\phi_{tki}}
}
for $c, j \in \{1, \ldots, C\}$. For each pseudo action vector $\zv_{t}^{c \leftrightharpoons j}$, sample $ \zv_{t+1:T}^{c \leftrightharpoons j} \sim q(\zv_{t+1:T} \given \zv_{t}^{c \leftrightharpoons j})$ and compute $f_t(\zv^{c \leftrightharpoons j}) = f(\zv_{1:t-1}, \zv_{t:T}^{c \leftrightharpoons j})$. Replacing $f(\zv^{c \leftrightharpoons j})$ in Proposition \ref{theorem 1b} with the $f_t(\zv^{c \leftrightharpoons j})$ leads to the gradient estimator in Proposition~\ref{col: TKC}.
\end{proof}

\begin{proof}[Proof of Proposition \ref{lem1}]
We first write the objective function $J(\thetav)$ in terms of the intermediate parameters $\phiv_t = \mathcal{T}_{\thetav}(\sv_t)$, and then apply the chain rule to obtain the policy gradient $\nabla_{\thetav}J(\thetav)$.
 Since
$$
J(\phiv_{0:\infty}) =\mathbb{E}_{\mathcal{P}(\sv_{0})\prod_{t=0}^{\infty}\mathcal{P}(\sv_{t+1}\given \sv_{t},a_{t})\text{Cat}(a_{t};\sigma(\phiv_{t})) }\left[\sum_{t=0}^\infty\gamma^t r(\sv_t,a_t)\right] 
$$
we have
\vspace{-2mm}
\ba{
\textstyle 
J(\phiv_{0:\infty})&=\mathbb{E}_{\mathcal{P}(\sv_{0})\left[\prod_{t'=0}^{t-1}\mathcal{P}(\sv_{t'+1}\given \sv_{t'},a_{t'}) \text{Cat}(a_{t'};\sigma(\phiv_{t'}))\right]}\left\{\E_{a_{t}\sim \text{Cat}(\sigma(\phiv_{t})) }\left[\sum_{t'=0}^{t-1}\gamma^{t'}r(s_{t'},a_{t'})+ \gamma^{t}Q(\sv_t,a_t)\right]\right\}\notag\\
&=\mathbb{E}_{\mathcal{P}(\sv_{0})\left[\prod_{t'=0}^{t-1}\mathcal{P}(\sv_{t'+1}\given \sv_{t'},a_{t'}) \text{Cat}(a_{t'};\sigma(\phiv_{t'}))\right]}\left\{\E_{a_{t}\sim \text{Cat}(\sigma(\phiv_{t})) }\left[\sum_{t'=0}^{t-1}\gamma^{t'}r(s_{t'},a_{t'})\right]\right\}\notag\\
&~~~~+\mathbb{E}_{\mathcal{P}(\sv_{0})\left[\prod_{t'=0}^{t-1}\mathcal{P}(\sv_{t'+1}\given \sv_{t'},a_{t'}) \text{Cat}(a_{t'};\sigma(\phiv_{t'}))\right]}\left\{\E_{a_{t}\sim \text{Cat}(\sigma(\phiv_{t})) }\left[ \gamma^{t}Q(\sv_t,a_t)\right]\right\},\label{eq:decompose}
}
\vspace{-3mm}
where $Q(\sv_t,a_t)$ is the discounted action-value function defined as $$Q(\sv_t,a_t):= 
\E_{\prod_{t'=t}^{\infty} \text{Cat}(a_{t'+1};\sigma(\phiv_{t'+1})) \mathcal{P}(\sv_{t'+1}\given \sv_{t'},a_{t'}) }\left[\sum_{t'=t}^\infty \gamma^{t'-t} r(\sv_{t'},a_{t'})\right].$$ The first summation term in \eqref{eq:decompose}
 can be ignored for computing $\nabla_{\phiv_t}J(\phiv_{0:\infty}) $, and the second one can be re-expressed as
\ba{
\E_{\mathcal{P}(\sv_{t}\given \sv_0,\pi_{\thetav}) \mathcal{P}(\sv_{0}) }\left\{\E_{a_{t}\sim \text{Cat}(\sigma(\phiv_{t})) }\left[ \gamma^{t}Q(\sv_t,a_t)\right]\right\},\label{eq:decompose1}
\vspace{-5mm}
}
where
$
\mathcal{P}(\sv_{t}\given \sv_0,\pi_{\thetav})$ is the marginal form of the joint distribution $ \prod_{t'=0}^{t-1}\mathcal{P}(\sv_{t'+1}\given \sv_{t'},a_{t'}) \text{Cat}(a_{t'};\sigma(\phiv_{t'})).
$
Applying Theorem 2 to \eqref{eq:decompose1}, we have
\ba{
\nabla_{\phiv_{tc}}J(\phiv_{0:\infty})& =
\E_{\mathcal{P}(\sv_{t}\given \sv_0,\pi_{\thetav}) \mathcal{P}(\sv_{0}) }\left\{\gamma^{t} \nabla_{\phiv_{tc}} \E_{a_{t}\sim \text{Cat}(\sigma(\phiv_{t})) }\left[ Q(\sv_t,a_t)\right]\right\}\notag\\
& = \E_{\mathcal{P}(\sv_{t}\given \sv_0,\pi_{\thetav}) \mathcal{P}(\sv_{0}) }\left\{\gamma^{t} \E_{\varpiv_t \sim \text{Dir}(\mathbf{1}_C) }
\left[g_{tc}\right]\right\},
\label{eq:decompose2}
\vspace{-3mm}
}
\vspace{-3mm}
where 
\bas{ \textstyle
g_{tc}:&=\sum_{j=1}^C f_{t\Delta}^{c\leftrightharpoons j}(\varpiv_t)\left(\frac{1}{C}-\varpi_{tj}\right),\\
f_{t\Delta}^{c\leftrightharpoons j}(\varpiv_t): &=Q(s_t,a_t^{c\leftrightharpoons j})-\frac{1}{C}\sum_{m=1}^C Q(s_t,a_t^{m\leftrightharpoons j}),\\
a_t^{_{{c} \leftrightharpoons j}}: &\textstyle= \argmin_{i\in\{1,\ldots,C\}} \varpi_{ti}^{_{{c} \leftrightharpoons j}} e^{-\phi_{ti}}.
}
Applying the chain rule, we obtain the gradient as
\ba{ \textstyle
&\nabla_{\thetav}J(\thetav) = \sum_{t=0}^\infty\sum_{c=1}^C \frac{\partial J(\phiv_{0:\infty})}{\partial\phi_{tc}}\frac{\partial\phi_{tc}}{\partial\thetav}\notag\\
&=\sum_{t=0}^\infty\sum_{c=1}^C \mathbb{E}_{\mathcal{P}(\sv_{0}) \mathcal{P}(\sv_{t}\given \sv_0,\pi_{\thetav}) }\left\{ \gamma^t \E_{\varpiv_t\sim \text{Dir}(\mathbf{1}_C) }
\left[ g_{tc}\right] 
\nabla_{\thetav} \phi_{tc}\right\}\notag\\
&=\sum_{t=0}^\infty \mathbb{E}_{\mathcal{P}(\sv_{0}) \mathcal{P}(\sv_{t}\given \sv_0,\pi_{\thetav}) }\left\{ \gamma^t \E_{\varpiv_t\sim \text{Dir}(\mathbf{1}_C) }
\left[ \nabla_{\thetav} \sum_{c=1}^C g_{tc} \phi_{tc}\right] 
\right\}\notag\\
&= \mathbb{E}_{\sv_t\sim \rho_{\pi}(\sv) }\left\{ \E_{\varpiv_t\sim \text{Dir}(\mathbf{1}_C) }
 \left[ \nabla_{\thetav} \sum_{c=1}^C g_{tc} \phi_{tc}\right] 
\right\},
}
where $\rho_{\pi}(\sv):= \sum_{t=0}^\infty \gamma^t \mathcal{P}(\sv_t=\sv \given \sv_0,\pi_{\thetav})$ is the unnormalized discounted state visitation frequency. This concludes the proof of the ARSM policy gradient estimator. The proof of the ARS policy gradient estimator can be similarly derived, omitted here for brevity. 
\end{proof}
\section{Additional Figures and Tables}

 \begin{table}[H]
 \caption{The constructions of variational auto-encoders. The following symbols ``$\rightarrow$'', ``$]$'', $)$'', and ``$\rightsquigarrow$'' represent deterministic linear transform, 
 leaky rectified linear units (LeakyReLU) \citep{maas2013rectifier} nonlinear activation, softmax nonlinear activation, and discrete stochastic activation, respectively, in the encoder; their reversed versions are used in the decoder. 
}\label{tab:Network}
 \centering

{
 \begin{tabular}{ccc}
 \toprule 
 &One layer & Two layers\\
 \midrule
 Encoder &784$\rightarrow$512]$\rightarrow$256]$\rightarrow$200)$\rightsquigarrow$200 & 784$\rightarrow$512]$\rightarrow$256]$\rightarrow$200)$\rightsquigarrow$200 $\rightarrow$ 200) $\rightsquigarrow$200 \\
 Decoder & 784$\leftsquigarrow$(784$\leftarrow$[512$\leftarrow$[256$\leftarrow$200 & 784$\leftsquigarrow$(784$\leftarrow$[512$\leftarrow$[256$\leftarrow$200 $\leftsquigarrow$ (200 $\leftarrow$ 200\\ 
 \bottomrule
 \end{tabular}}\vspace{-1mm}
\end{table}

\begin{figure}[H]
 \centering
 \includegraphics[width=0.8\textwidth,height=7cm]{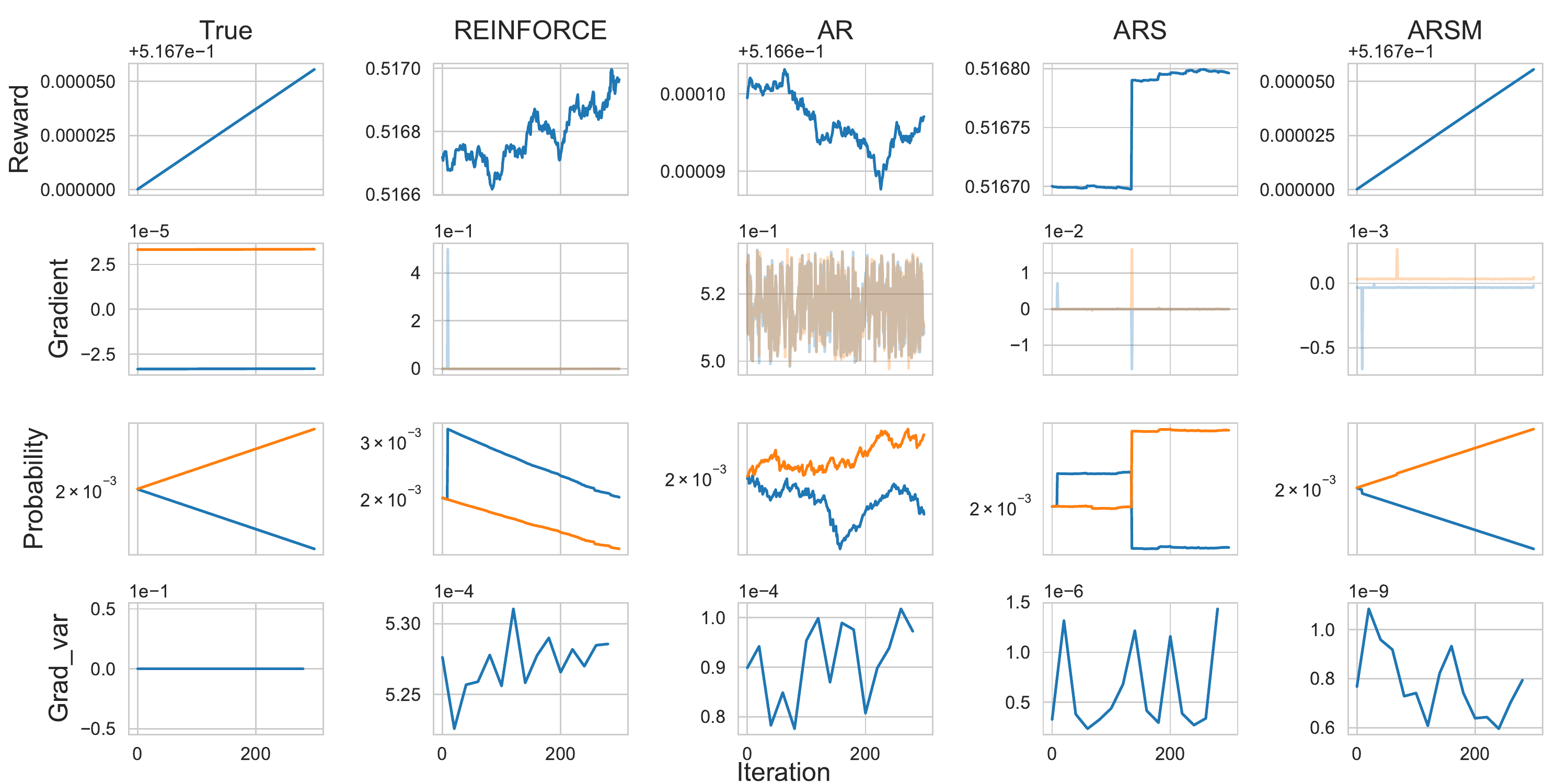}
 \caption{
Analogous plots to these in Figure \ref{fig:toy}, obtained with $C=1,000$.
 }
 \label{fig:toy1000}
\end{figure}

\begin{figure}[H]
 \centering
 \includegraphics[width=0.8\textwidth,height=7cm]{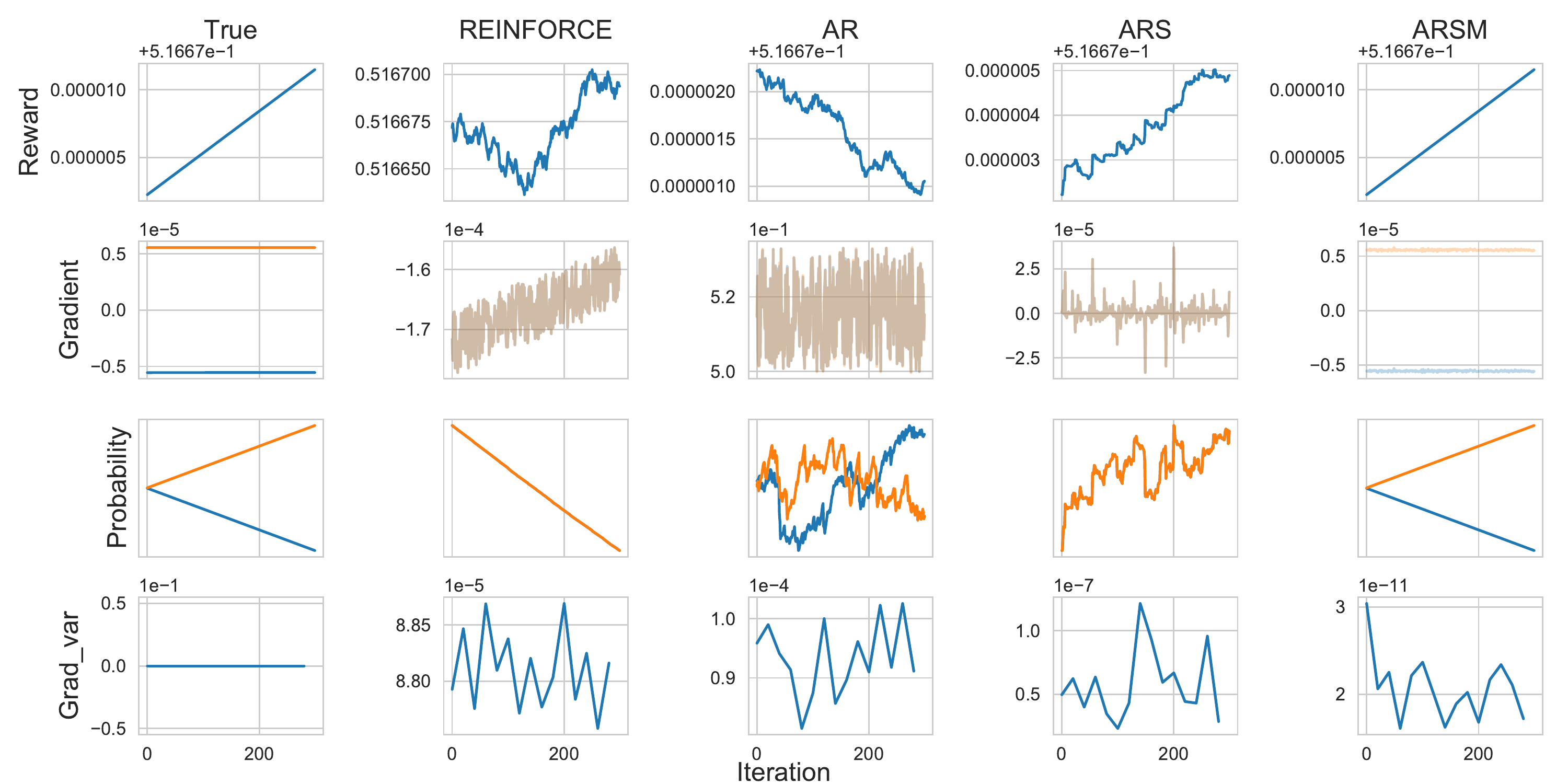}
 \caption{
 Analogous plots to these in Figure \ref{fig:toy}, obtained with $C = 10,000$.}
 \label{fig:toy10000}
\end{figure}

 \begin{figure}[H]
 \centering
 \includegraphics[width=0.45\textwidth,height=5cm]{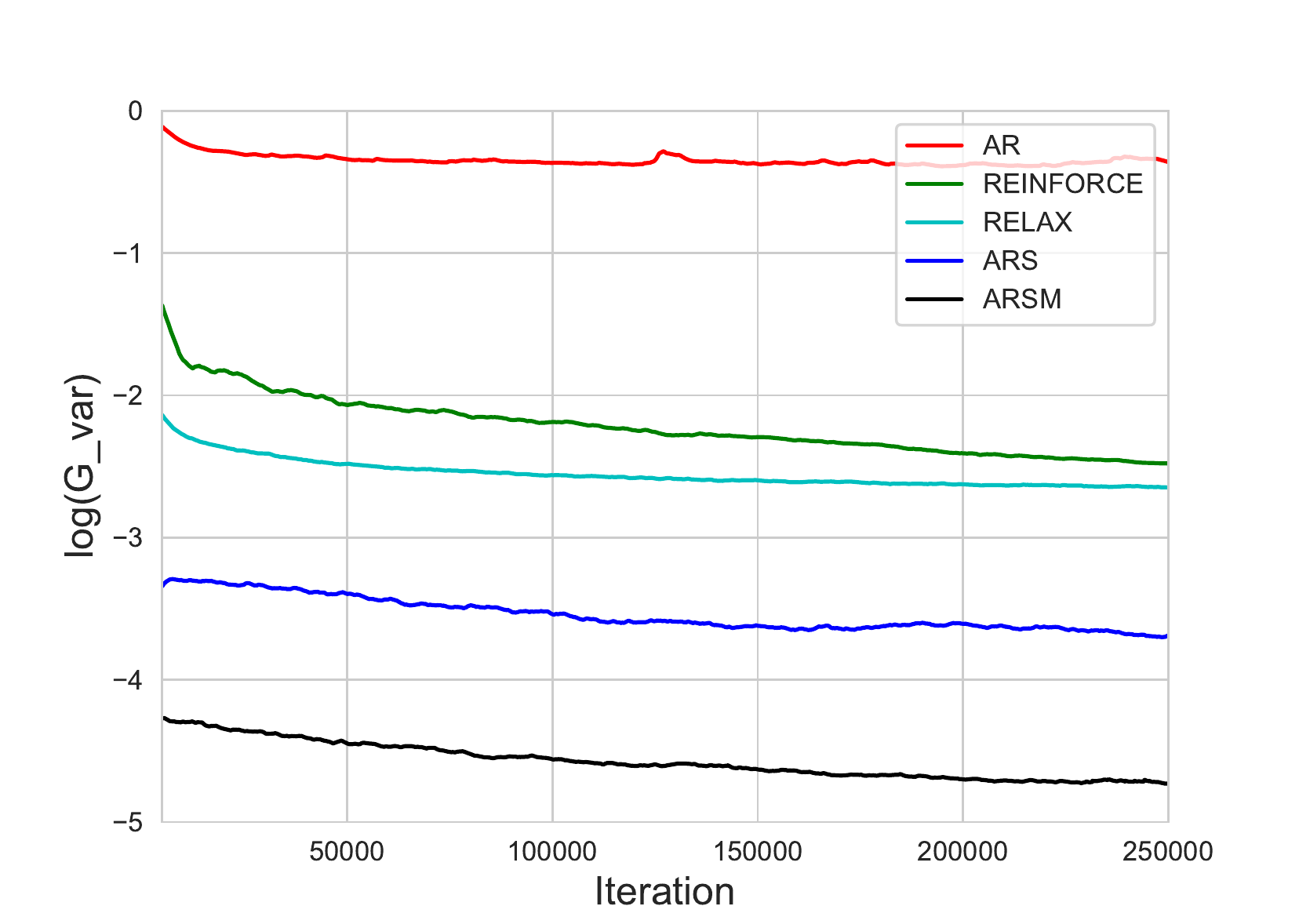}
 \vspace{-3mm}
 \caption{ Trace plots of the log variance of 
 various unbiased gradient estimators for categorical VAE on MNIST. The variance is estimated by exponential moving averages of the first and second moments with a decay factor of 0.999. The variance is averaged over all elements of the gradient vector.}
 \label{fig:vae_var}

\centering
\begin{tabular}{c}
\hspace{-2em}
\hspace{1em}\includegraphics[width=0.59\textwidth,height=4cm]{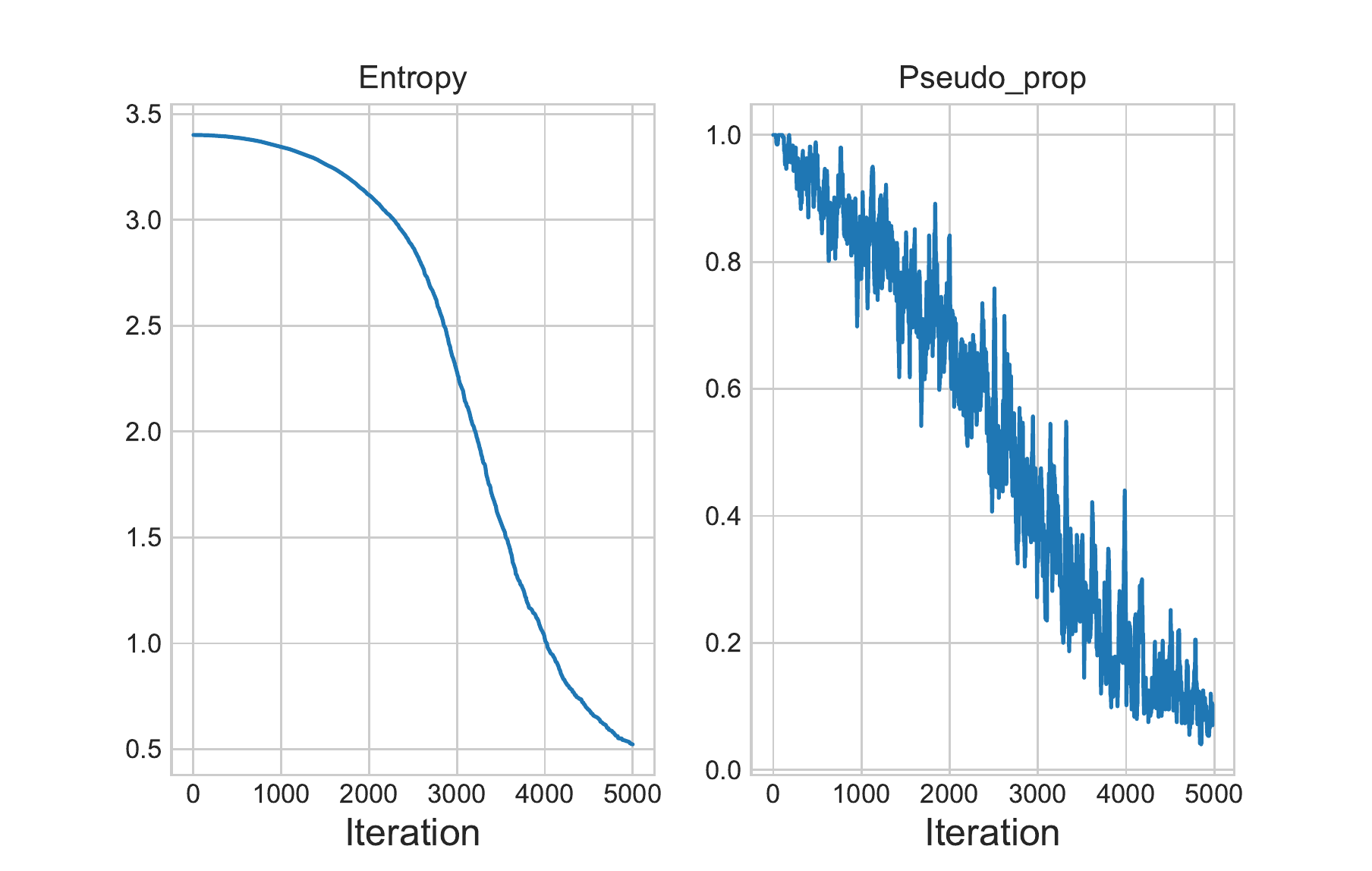}\hspace{-1em} \\ (a)\\
\includegraphics[width=0.6\textwidth,height=4cm]{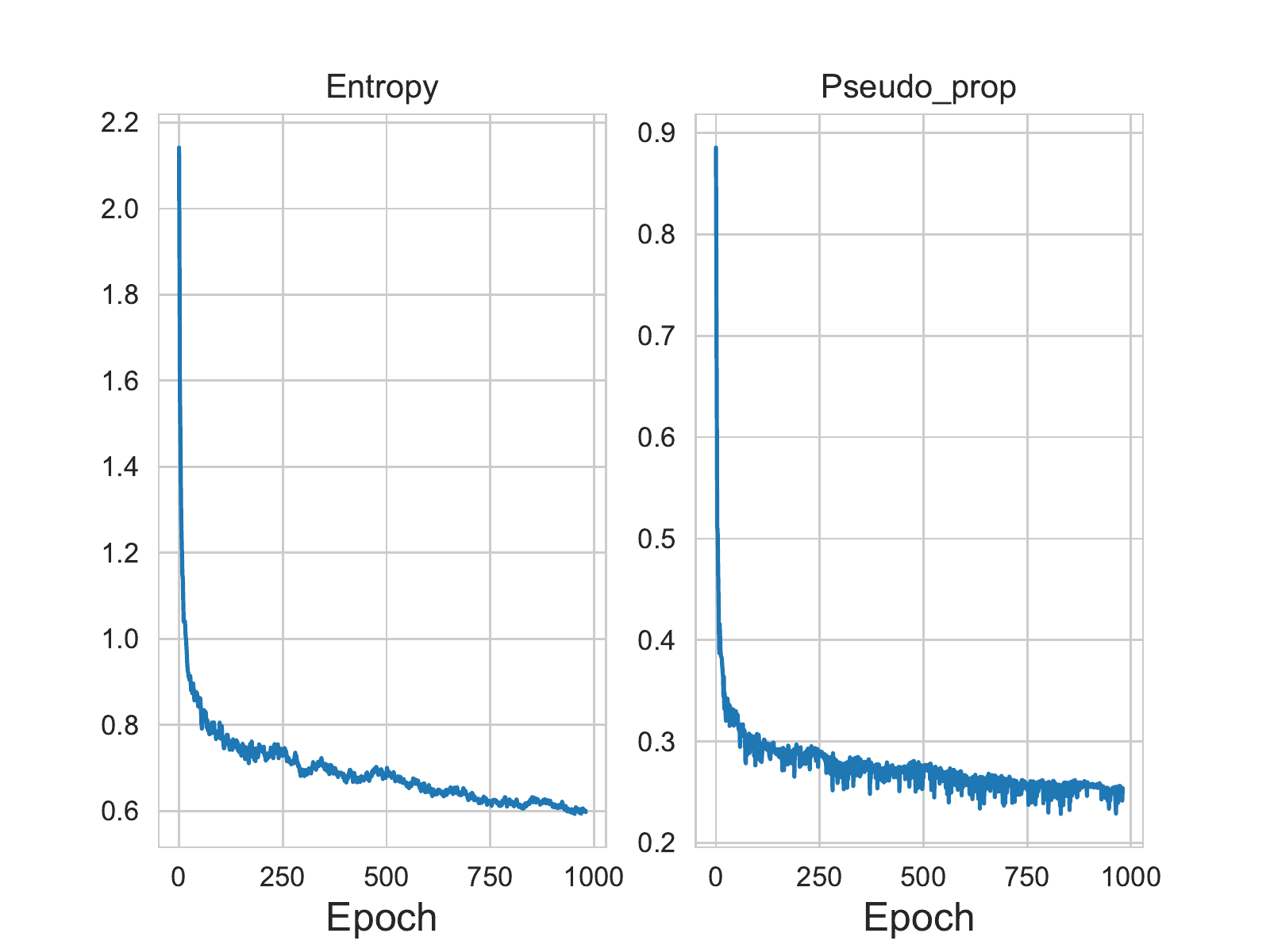}\hspace{-1em} \\(b)\\
\includegraphics[width=0.6\textwidth,height=4cm]{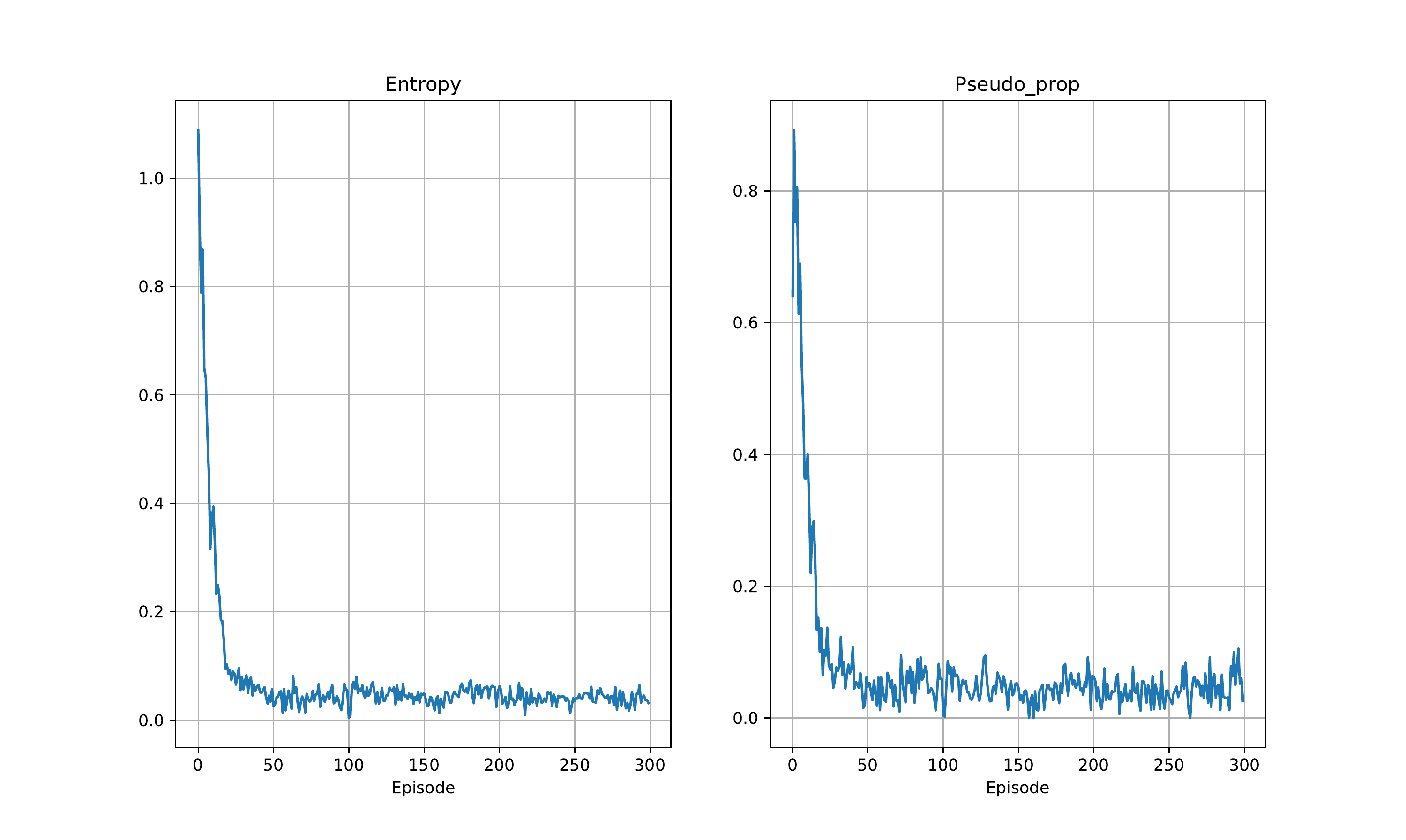}\hspace{-1em} \\(c)
\end{tabular} \vspace{-1mm}
\caption{The entropy of latent categorical distributions and the number of distinct pseudo actions, which differ from their corresponding true actions, both decrease as the training progresses. We plot the average entropy for $\{z_{tk}\}$ for all $t=1:T$ and $k=1:K$. The pseudo action proportion for the $k$-th categorical random variable at the $t$-th stochastic layer is calculated as the number of unique values in $\{z_{tk}^{c \leftrightharpoons j} \}_{c = 1:C, j=1:C}\backslash z_{tk}$ divided by $C-1$, the maximum number of distinct pseudo actions that differ from the true action $z_{tk}$. We plot the average pseudo action proportion for $\{z_{tk}\}$ for all $t=1:T$ and $k=1:K$. Subplots (a), (b), and (c) correspond to the Toy data ($T=K=1$, $C=30$), VAE with a single stochastic layer ($T=1$, $K=20$, $C=10$), and Acrobot RL task ($0\le T\le 500$, $K=1$, $C=3$); other settings yield similar trace plots.
}
\label{fig:entropy}
\end{figure}

 \begin{figure}[H]
 \centering
 \includegraphics[width=0.5\textwidth,height=4.5cm]{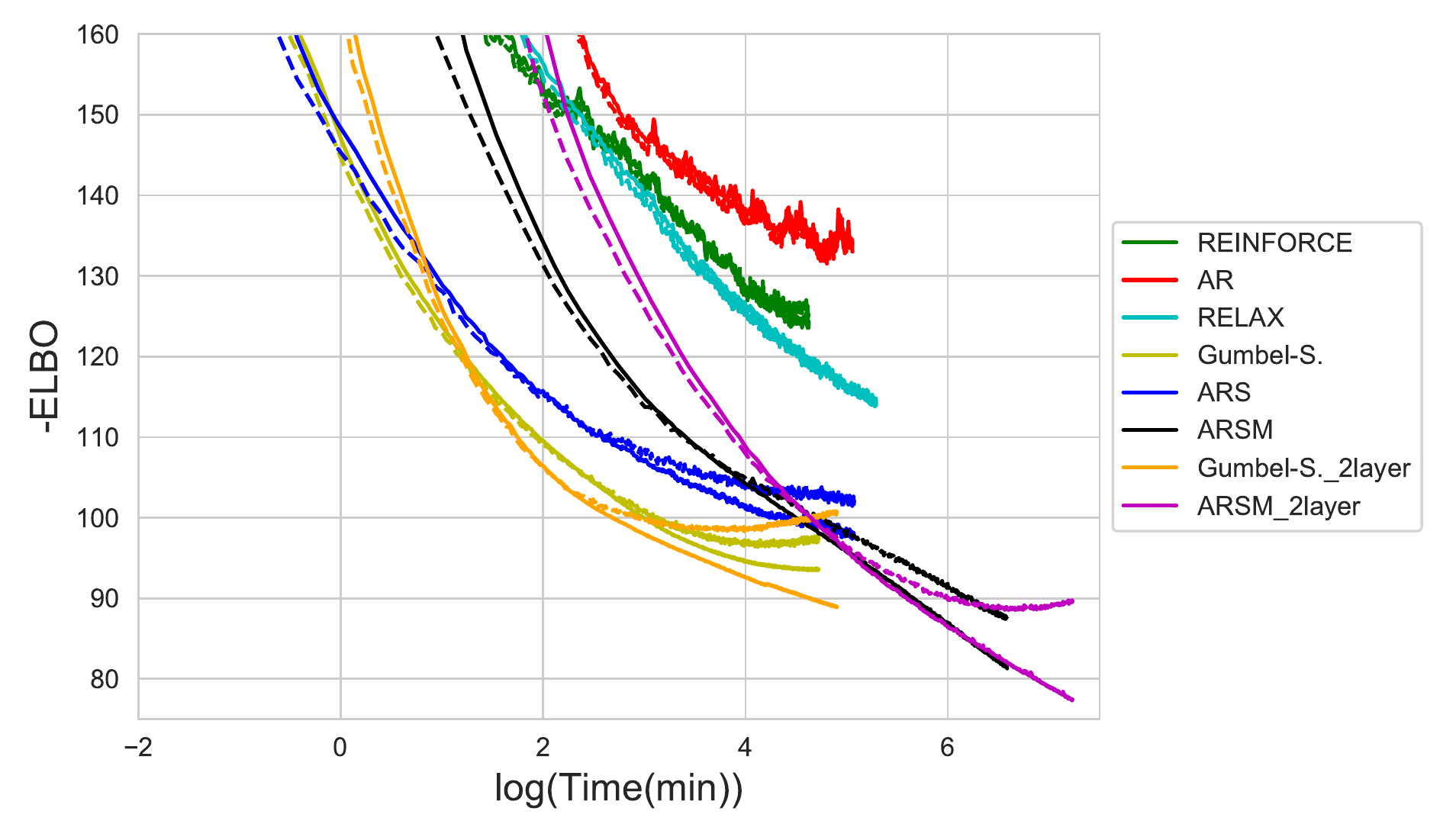}
 \caption{ Plots of $-$ELBOs (nats) on binarized MNIST against wall clock times on NVIDIA Tesla V100 GPU (analogous ones against training iterations are shown in Figure \ref{fig:vae}). The solid and dash lines correspond to the training and testing respectively (best viewed in color).}
 \label{fig:vae_time}
\end{figure}

\section{Algorithm}

{\begin{algorithm}[h] \small{
\SetKwData{Left}{left}\SetKwData{This}{this}\SetKwData{Up}{up}
\SetKwFunction{Union}{Union}\SetKwFunction{FindCompress}{FindCompress}
\SetKwInOut{Input}{input}\SetKwInOut{Output}{output}
\Input{Reward function $f(\zv;\thetav)$ parameterized by $\thetav$; }
\Output{Distribution parameter $\Phimat = (\phiv_1,\cdots,\phiv_K)\in\mathbb{R}^{C\times K}$ and reward function parameter $\thetav$ that maximize the expected reward as $\cE(\Phimat,\thetav) := \E_{\zv \sim \prod_{k=1}^K \text{Cat}(z_k;\sigma(\phiv_k))
}[f{}(\zv; \thetav)]$;}
\BlankLine
Initialize $\Phimat$ and $\thetav$ randomly;\;

\While{not converged}{
 Sample $\piv_k \sim \dir(\mathbf{1}_C)$ for $k = 1,\ldots,K$; 
 
 {Let $z_k = \argmin_{i\in\{1,\ldots,C\}}( \ln \pi_{ki}- \phi_{ki}) $ for $k = 1,\ldots,K$ to obtain the true action vector $\zv=(z_1,\ldots,z_k)$;}

 \If{Using the ARS estimator}{
 	
 	Using a single reference vector $\jv=(j_1,\ldots,j_K) $ for the variable-swapping operations, where all $j_k$ are uniformly at random selected from $\{1,\ldots,C\}$;
 	
	\For{$c=1,\ldots,C$
	(in parallel)}{
	
		Let $ z_k^{_{c \leftrightharpoons j_k}} = \argmin_{i\in\{1,\ldots,C\}} (\ln \pi^{_{c \leftrightharpoons j_k}}_{ki} -\phi_{ki})$ for $k = 1,\ldots,K$;
		
		Denote ${\zv}^{_{c \leftrightharpoons \jv}}=(z_1^{_{c \leftrightharpoons j_1}},\ldots, z_K^{_{c \leftrightharpoons j_K}}) $ as the $c$th pseudo action vector;

}
Let $\bar{f} = \frac{1}{C}\sum_{c=1}^C f({\zv}^{_{c \leftrightharpoons \jv}})$

Let 
$g_{\phi_{kc}} = \big(f({\zv}^{_{c \leftrightharpoons \jv}}) -\bar{f} \,\big) (1-C\pi_{kj_k})$ 
 for all $(k,c)\in\{(k,c)\}_{k=1:K,~c=1:C}$;

 }
 
 \If{Using the ARSM estimator}{
 Initialize the diagonal of reward matrix $F\in\mathbb{R}^{C\times C}$ with $f(\zv)$, which means letting $F_{cc}=f(\zv)$ for $c=1,\ldots, C$;

	\For{$(c,j)\in\{(c,j)\}_{{c}=1:{C},~j< c}$
	(in parallel)}{
			Let $\jv=j\mathbf{1}_K$, which means $j_k\equiv j$ for all $k\in\{1,\ldots,K\}$;
		
		Let $ z_k^{_{c \leftrightharpoons j}} = \argmin_{i\in\{1,\ldots,C\}} (\ln \pi^{_{c \leftrightharpoons j}}_{ki} -\phi_{ki})$ for $t = 1,\ldots,K$;
		
		Denote ${\zv}^{_{c \leftrightharpoons \jv}}=(z_1^{_{c \leftrightharpoons j}},\ldots, z_K^{_{c \leftrightharpoons j}}) $ as the $(c,j)$th pseudo action vector;

		Let $F_{cj} = F_{jc}= f({\zv}^{_{c \leftrightharpoons \jv}})$; 
}

Let $\bar F_{\cdotv j}= \frac{1}{C} \sum_{c=1}^C F_{cj}$ for $j=1,\ldots, C$;

Let $g_{\phi_{kc}} = \sum_{j=1}^C (F_{cj} - \bar F_{\cdotv j} ) (\frac{1}{C}-\pi_{kj})$ for all $(t,c)\in\{(t,c)\}_{k=1:K,~c=1:C}$; 

 }

$\Phimat = \Phimat + \rho_{\phi} \{g_{\phiv_{kc}}\}_{k=1:T,~c=1:C},~~~~$ with step-size $\rho_{\phi} $;

 $\thetav = \thetav + \eta_{\theta} \nabla_{\thetav}f{}(\zv;\thetav),~~~~$ with step-size $\eta_{\theta}$
}
*Note if the categorical distribution parameter $\Phimat$ itself is defined by neural networks with parameter $\wv$, standard backpropagation can be applied to compute the gradient with $\frac{\partial \mathcal{E}( \Phimat,\thetav)}{\partial \wv} = \frac{\partial \mathcal{E}( \Phimat,\thetav)}{\partial \Phimat}\frac{\partial \Phimat}{\partial \wv} \approx \nabla_{\wv}\big(\sum_{k=1}^K\sum_{c=1}^C g_{\phiv_{kc} } \phi_{kc}\big)$. 
\caption{ARS/ARSM gradient for $K$-dimensional $C$-way categorical vector $\zv = (z_1,\cdots,z_K)$, where $z_k\in\{1,\ldots,C\}$. }\label{alg:ARSM}}
\end{algorithm}
}

{\begin{algorithm}[h] \small{
\SetKwData{Left}{left}\SetKwData{This}{this}\SetKwData{Up}{up}
\SetKwFunction{Union}{Union}\SetKwFunction{FindCompress}{FindCompress}
\SetKwInOut{Input}{input}\SetKwInOut{Output}{output}
\Input{Reward function $f(\zv_{1:T};\thetav)$ parameterized by $\thetav$; 
}
\Output{Distribution parameter $\Phimat_{t} = (\phiv_{t1},\cdots,\phiv_{tK})'\in\mathbb{R}^{K\times C}$ and 
parameter $\thetav$ that maximize the expected reward as $\cE(\Phimat_{1:T},\thetav) := \E_{\zv \sim q_{ \Phimat_{1}}(\zv_1\given \xv)[\prod\nolimits_{t=1}^{T-1}q_{\Phimat_{t+1}}(\zv_{t+1} \given \zv_{t})]
)
}[f{}(\zv; \thetav)]$; $q_{\Phimat_{t}}(\zv_{t} \given \zv_{t-1}) = \prod_{k =1}^K \text{Categorical}(z_{tk} | \sigma(\phiv_{tk}(\zv_{t-1})) )
$;}
\BlankLine
Initialize $\Phimat_{1:T}$ and $\thetav$ randomly;\;

\While{not converged}{

 \For{t = 1 : T}{
Sample $\piv_{tk} \sim \dir(\mathbf{1}_C)$ for $k = 1,\ldots,K$; 
 {Let $z_{tk} = \argmin_{i\in\{1,\ldots,C\}}( \ln \pi_{tki} - \phi_{tki} ) $ for $k = 1,\ldots,K$ to obtain the true action vector $\zv_t =(z_{t1} ,\ldots,z_{tK} )$;}
 
 \If{Using the ARS estimator}{
 
 Let $\jv_t=(j_{t1},\ldots,j_{tK}) $, where $j_{tk}\in\{1,\ldots,C\}$ is a randomly selected reference category for dimension $k$ at layer $t$.

	\For{$c=1,\ldots,C$
		(in parallel)}{
	
		Let $ z_{tk}^{c \leftrightharpoons j_{tk}}:\textstyle=\argmin_{i\in\{1,\ldots,C\}} \pi^{_{c \leftrightharpoons j_{tk}}}_{tki} e^{-\phi_{tki}}$ for $k = 1,\ldots,K$;
		
		Denote $\zv_{t}^{c \leftrightharpoons \jv_t}=( z_{t1}^{c \leftrightharpoons j_{t1}},\ldots,z_{tK}^{c \leftrightharpoons j_{tK}}) $ as the $c$th pseudo action vector;

}
Let $\bar{f}_t = \frac{1}{C}\sum_{c=1}^C f(\zv_{t}^{c \leftrightharpoons \jv_{t}})$

Let 
$g_{\phi_{tkc}} = \big(f(\zv_t^{{c \leftrightharpoons \jv_t}}) -\bar{f}_t \,\big) (1-C\pi_{kj_{tk}})$ 
 for all $(k,c)\in\{(k,c)\}_{k=1:K,~c=1:C}$; 
 }	
{
\If{Using the ARSM estimator}{

 Let $F ^{(t)} \in\mathbb{R}^{C\times C}$ 
 
If $t>1$, sample $\zv_{1:t-1} \sim q(\zv_{1:t-1} | \xv)$ ;

 \For{$(c,j)\in\{(c,j)\}_{{c}=1:{C},~j \leq c}$
	(in parallel)}{
			Let $\jv=j\mathbf{1}_K$, which means $j_k\equiv j$ for all $k\in\{1,\ldots,K\}$;
		
		Let $ z_{tk}^{c \leftrightharpoons j}:\textstyle=\argmin_{i\in\{1,\ldots,C\}} \pi^{_{c \leftrightharpoons j}}_{tki} e^{-\phi_{tki}}$ for all $k\in\{1,\ldots,K\}$;
		
		Denote $\zv_{t}^{c \leftrightharpoons j}=( z_{t1}^{c \leftrightharpoons j},\ldots,z_{tK}^{c \leftrightharpoons j}) $ as the $(c,j)$th pseudo action vector;
		
		If $t<T$, sample $\zv_{t+1:T}^{c \leftrightharpoons j} \sim q(\zv_{t+1:T} | \zv_{t}^{c \leftrightharpoons j}) $;
		
				Let $F^{(t)} _{cj} = F^{(t)} _{jc}= f(\zv_{1:t-1}, \zv_{t:T}^{c \leftrightharpoons j} )$;

Let $\bar F^{(t)} _{\cdotv j}= \frac{1}{C} \sum_{c=1}^C F^{(t)} _{cj}$ for $j=1,\ldots, C$;

Let $g_{\phi_{tkc}} = \sum_{j=1}^C (F^{(t)} _{cj} - \bar F^{(t)} _{\cdotv j} ) (\frac{1}{C}-\pi_{kj})$ for all $(k,c)\in\{(k,c)\}_{k=1:K,~c=1:C}$; 
 }
 }
 $\Phimat _{t}= \Phimat _{t}+ \rho_{\Phimat _{t}} \{g_{\phi_{tkc}}\}_{k=1:K,~c=1:C},~~~~$ with step-size $\rho_{\Phimat _{t}} $;
}}

 $\thetav = \thetav + \eta_{\theta} \nabla_{\thetav}f{}(\zv;\thetav),~~~~$ with step-size $\eta_{\theta}$
}

\caption{ARS/ARSM gradient for $T$ layer $K$-dimensional $C$-way categorical vector $\zv_t = (z_{t1},\cdots,z_{tK})$, where $t \in \{1,\ldots, T\}$, $z_{tk}\in\{1,\ldots,C\}$. }\label{alg:ARSM1}}
\end{algorithm}
}

{\begin{algorithm}[h] {

\SetKwData{Left}{left}\SetKwData{This}{this}\SetKwData{Up}{up}
\SetKwFunction{Union}{Union}\SetKwFunction{FindCompress}{FindCompress}
\SetKwInOut{Input}{input}\SetKwInOut{Output}{output}
\Input{
Maximum number of 
state-pseudo-action rollouts 
$S_{\max}$ allowed in a single iteration; }
\Output{
Optimized policy parameter $\thetav$;
}
\BlankLine

\While{not converged}{
Given a random state $\sv_0$ and environment dynamics $\mathcal{P}(\sv_{t+1}\given a_t,\sv_{t})$, 
we run an episode till its termination (or a predefined number of steps) by sampling a true-action trajectory $(a_0,\sv_1,a_1,\sv_2,\ldots)$ given policy 
$\pi_{\thetav}(a_t\given \sv_t) := \mbox{Cat}(a_t; \sigma(\phiv_t)),~~\phiv_t:=\mathcal{T}_{\thetav}(\sv_t)$, where we sample each $a_t$ by 
 first 
sampling $(\varpi_{t1},\ldots,\varpi_{t{c}})\sim \mbox{Dir}(\mathbf{1}_{C})$ and then letting 
 $a_t = \argmin_{i\in\{1,\ldots,C\}} (\ln \varpi_{ti}-\phi_{ti})$; \;

 Record the termination time step of the episode as $T$, and set the rollout set as $H=[]$ and $S_0=0$;
 
 \For{ $t\in\text{RandomPermute}(0,\ldots,T)$ }{

 	Let $A_t = \{(c,j)\}_{{c}=1:{C},~j<c}$

 Initialize $a_{t}^{_{c \leftrightharpoons j}}=a_t$ for all ${c}$ and $j$; 

 \For{ $(c,j) \in A_t 
 $ (in parallel)}{
Let $ a_{t}^{_{c \leftrightharpoons j}}=a_t^{_{j \leftrightharpoons c}}=\argmin\nolimits_{i\in\{1,\ldots,C\}} (\ln \varpi_{ti}^{_{{c} \leftrightharpoons j}} {-\phi_{ti}})
$
}
Let $S_t=\mbox{unique}(\{ a_{t}^{_{c \leftrightharpoons j}}\}_{c,j})\backslash a_t$, which means $S_t$ is the set of all unique values in $\{a_{t}^{_{c \leftrightharpoons j}}\}_{c,j}$ that are different from the true action $a_t$;
Denote the cardinality of $S_t$ as $|S_t|$, where $0\le |S_t|\le {C}-1$ ;

\uIf{$S_0+|S_t|\le S_{\max}$}{
$S_0=S_0+|S_t|$

Append $t$ to $H$

}
\Else{
\textbf{break}
}
}
\For{ $t \in H$ (in parallel)}{

Initialize $R_{tmj}= \hat Q(\sv_t,a_t) = \sum_{t'=t}^T \gamma^{t'-t} r(\sv_{t'},a_{t'})$ for all $m,j\in\{1,\ldots,{C}\}$ \;
\For{$k\in\{1,\ldots,|S_t|\}$ (in parallel)}{
Let $\tilde a_{tk} = S_t(k)$ be the $k$th unique pseudo action at time $t$;

Evaluate $\hat Q (\sv_t, \tilde a_{tk})$, which in this paper is set as $r(\sv_{t},\tilde a_{tk})+ \gamma \sum_{t'=t+1}^\infty \gamma^{t'-(t+1)} r(\tilde \sv_{t'},\tilde a_{t'})$, where $(\sv_{t},\tilde a_{tk},\tilde \sv_{t+1},\tilde a_{t+1},\ldots)$ is a state-pseudo-action rollout generated by taking pseudo action $\tilde a_{tk}$ at state $\sv_t$ and then following the environment dynamics and policy $\pi_{\thetav}$; 

Let $
R_{tmj} = 
\hat Q (\sv_t, \tilde a_{tk}) $ for all $(m,j)$ in $\{(m,j): a_{t}^{_{{m} \leftrightharpoons j_t}} =\tilde a_{tk}\}$;
}

}

{
Esimate the ARSM policy gradient as 

 	$$
\nabla_{\thetav}J(\thetav)\approx \nabla_{\thetav} \left\{\sum_{t\in H} 
\sum_{{c}=1}^{C}\left[\sum_{j=1}^{C} \left(
R_{tcj} 
-\frac{1}{{C}}\sum_{m=1}^{C}
R_{tmj} 
\right)
 \left(\frac{1}{{C}}- \varpi_{tj}\right)
\right]\phi_{tc}\right\} , 
$$
}
$\thetav = \thetav + \eta_{\theta} J(\thetav),~~~~$ with step-size $\eta_{\theta}$;
}
\caption{ARSM policy gradient for reinforcement learning with a discrete-action space of $C$ actions.} \label{alg:ARM}}

\end{algorithm}

\end{document}